\newcommand{\BEAS}{\begin{eqnarray*}}
\newcommand{\EEAS}{\end{eqnarray*}}
\newcommand{\BEA}{\begin{eqnarray}}
\newcommand{\EEA}{\end{eqnarray}}
\newcommand{\BEQ}{\begin{equation}}
\newcommand{\EEQ}{\end{equation}}
\newcommand{\BIT}{\begin{itemize}}
\newcommand{\EIT}{\end{itemize}}
\newcommand{\BNUM}{\begin{enumerate}}
\newcommand{\ENUM}{\end{enumerate}}
\newcommand{\BA}{\begin{array}}
\newcommand{\EA}{\end{array}}
\newcommand{\diag}{\mathop{\rm diag}}
\newcommand{\Diag}{\mathop{\rm Diag}}
\newcommand{\tr}{\mathop{ \rm tr}}
\newcommand{\rb}{\mathbb{R}}
\newcommand{\BlackBox}{\rule{1.5ex}{1.5ex}}  
\newenvironment{proof}{\par\noindent{\bf Proof\ }}{\hfill\BlackBox\\[2mm]}
\newtheorem{proposition}{Proposition}
\newcommand{\mysec}[1]{Section~\ref{sec:#1}}
\newcommand{\eq}[1]{Eq.~(\ref{eq:#1})}
\newcommand{\myfig}[1]{Figure~\ref{fig:#1}}
\def \E{{\mathbb E}}
\def \P{{\mathbb P}}
\def \E{{\mathbb E}}
\def \P{{\mathbb P}}
\title{Efficient Algorithms for Non-convex Isotonic Regression through Submodular Optimization}
\author{
Francis Bach \\
INRIA,
D\'epartement d'informatique de l'ENS
\\
Ecole normale sup\'erieure, CNRS, PSL Research University \\
Paris, France \\
  \texttt{francis.bach@inria.fr} }
\setlist[itemize]{leftmargin=7mm}
\begin{document}

\maketitle

\begin{abstract}
We consider the minimization of submodular functions subject to ordering constraints.
 We show that this optimization problem can be cast as a convex optimization problem on a space of uni-dimensional measures, with ordering constraints corresponding to first-order stochastic dominance.  We propose new discretization schemes that lead to simple and efficient algorithms based on zero-th, first, or higher order oracles;  these algorithms also lead to improvements without isotonic constraints. Finally,   our experiments  show that non-convex loss functions can be much more robust to outliers for isotonic regression, while still leading to an efficient optimization problem.

 \end{abstract}

\section{Introduction}

Shape constraints such as ordering constraints appear everywhere in estimation problems in machine learning, signal processing and statistics. They typically correspond to prior knowledge, and  are imposed for the interpretability of models, or to allow non-parametric estimation with improved convergence rates~\cite{kakade2011efficient,chen2016generalized}. 
In this paper, we focus on imposing ordering constraints into an estimation problem, a setting typically referred to as \emph{isotonic} regression~\cite{best1990active,spouge2003least,luss2014generalized}, and we aim to generalize the set of problems for which efficient (i.e., polynomial-time) algorithms exist. 

We thus focus on the following optimization problem:
\BEQ
\label{eq:isointro}
\min_{x \in [0,1]^n} H(x) \mbox{ such that } \forall (i,j) \in E, \ x_i \geqslant x_j,
\EEQ
where $E \subset \{1,\dots,n\}^2$ represents the set of constraints, which form a directed acyclic graph. For simplicity, we restrict $x$ to the set $[0,1]^n$, but our results extend to general products of (potentially unbounded) intervals. 
 
 As convex constraints, isotonic constraints are well-adapted to estimation problems formulated as convex optimization problems where $H$ is convex, such as for linear supervised learning problems, with many efficient algorithms for separable convex problems~\cite{best1990active,spouge2003least,luss2014generalized,yu2016exact}, which can thus be used as inner loops in more general convex problems by using projected gradient methods~(see, e.g.,~\cite{bertsekas1999nonlinear} and references therein).
 
 In this paper, we show that another form of structure can be leveraged. We will assume that $H$ is \emph{submodular}, which is equivalent, when twice continuously differentiable, to having nonpositive cross second-order derivatives. This notably includes \emph{all} (potentially non convex) separable functions (that are sums of functions that depend on single variables), but also many other examples (see \mysec{review}).
 
Minimizing submodular functions on continuous domains has been recently  shown to be equivalent to a convex optimization problem on a space of uni-dimensional measures~\cite{bach2015submodular}, and given that the functions $x \mapsto \lambda (x_j -x_i)_+$ are submodular for any $\lambda > 0$, it is natural that by using $\lambda$ tending to $+\infty$, we recover as well a convex optimization problem; the main contribution of this paper is to provide  a simple framework based on stochastic dominance, for which we design efficient algorithms which are based on simple oracles on the function $H$ (typically access to function values and derivatives).
In order to obtain such algorithms, we go significantly beyond~\cite{bach2015submodular} by introducing novel discretization algorithms that also provide improvements without any isotonic constraints.

More precisely, we make the following contributions:

\vspace*{-.1cm}

\BIT

\item[--] We show in \mysec{domin} that minimizing a submodular function with isotonic constraints can be cast as a convex optimization problem on a space of uni-dimensional measures, with isotonic constraints corresponding to first-order stochastic dominance.

\item[--] On top of the naive discretization schemes presented in \mysec{algo}, we propose in \mysec{algonew} new discretization schemes that lead to simple and efficient algorithms based on zero-th, first, or higher order oracles. They go from requiring $O(1/\varepsilon^3)
=O(1/\varepsilon^{2+1})$ function evaluations to reach a precision $\varepsilon$, to $O(1/\varepsilon^{2+1/2})$ and 
$O(1/\varepsilon^{2+1/3})$.

\item[--] Our experiments in \mysec{exp} show that non-convex loss functions can be much more robust to outliers for isotonic regression.
\EIT

\section{Submodular Analysis in Continuous Domains}
\label{sec:review}

In this section, we review the framework of~\cite{bach2015submodular} that shows how to minimize submodular functions using convex optimization.

\paragraph{Definition.} Throughout this paper, we consider a \emph{continuous function} $H: [0,1]^n \to \rb$. The function~$H$ is said to be \emph{submodular} if and only if~\cite{lorentz1953inequality,topkis1978minimizing}:
\BEQ
\label{eq:submoddef}
\forall (x,y) \in [0,1]^n \times [0,1]^n, \ H(x)+H(y) \geqslant H( \min\{x,y\}) + H(\max \{x,y\}),
\EEQ
where the $\min$ and $\max$ operations are applied component-wise. If $H$ is continuously twice differentiable, then this is equivalent to $\frac{\partial^2 H}{\partial x_i \partial x_j}(x) \leqslant 0$ for any $i \neq j$ and $x \in [0,1]^n$~\cite{topkis1978minimizing}.

The cone of submodular functions on $[0,1]^n$ is invariant by marginal strictly increasing transformations, and includes all functions that depend on a single variable (which play the role of linear functions for convex functions), which we refer to as \emph{separable} functions.

\paragraph{Examples.}
The classical examples are: (a) any separable function, (b) convex functions of the  difference of two components, (c) concave functions of a positive linear combination, (d) negative log densities of multivariate totally positive distributions~\cite{karlin1980classes}. See \mysec{exp} for a concrete example.

 \paragraph{Extension on a space of measures.}
We consider the convex set $\mathcal{P}([0,1])$ of \emph{Radon probability measures}~\cite{rudin1986real}  on $[0,1]$, which is the closure (for the weak topology) of  the convex hull of all Dirac measures. In order to get an \emph{extension}, we   look for a function defined on the set of \emph{products of probability measures}
$\mu \in    \mathcal{P}([0,1])^n$, such that if all $\mu_i$, $i=1,\dots,n$, are Dirac measures at points $x_i \in [0,1]$, then we have a function value equal to $H(x_1,\dots,x_n)$. Note that $\mathcal{P}([0,1])^n $ is different from
$\mathcal{P}([0,1]^n) $, which is the set of probability measures on $[0,1]^n$.

 For a probability distribution $\mu_i \in \mathcal{P}([0,1])$ defined on  $[0,1]$, we can define the (reversed) cumulative distribution function $F_{\mu_i}: [0,1]\to [0,1]$ as
$ F_{\mu_i}(x_i) = \mu_i \big(
[ x_i,1] 
\big)$. This is a non-increasing left-continuous function from $[0,1]$ to $[0,1]$, such that $F_{\mu_i}( 0) = 1$ and $F_{\mu_i}(1) = \mu_i ( \{ 1\})$. 
See  illustrations in the left plot of \myfig{cumcont}.

We can then define the ``inverse'' cumulative function from $[0,1]$ to $[0,1]$ as $
F_{\mu_i}^{-1}(t_i) = \sup \{ x_i \in [0,1], \ F_{\mu_i}(x_i) \geqslant t_i \}$.
The function $F_{\mu_i}^{-1}$ is non-increasing and right-continuous,  and such that 
$F_{\mu_i}^{-1}(1) = \min {\rm supp}(\mu_i)$ and $F_{\mu_i}^{-1}(0) = 1$. Moreover,  we have $F_{\mu_i}(x_i) \geqslant t_i \Leftrightarrow F_{\mu_i}^{-1}(t_i) \geqslant x_i$.

\begin{figure}

\begin{center}
\includegraphics[scale=.38]{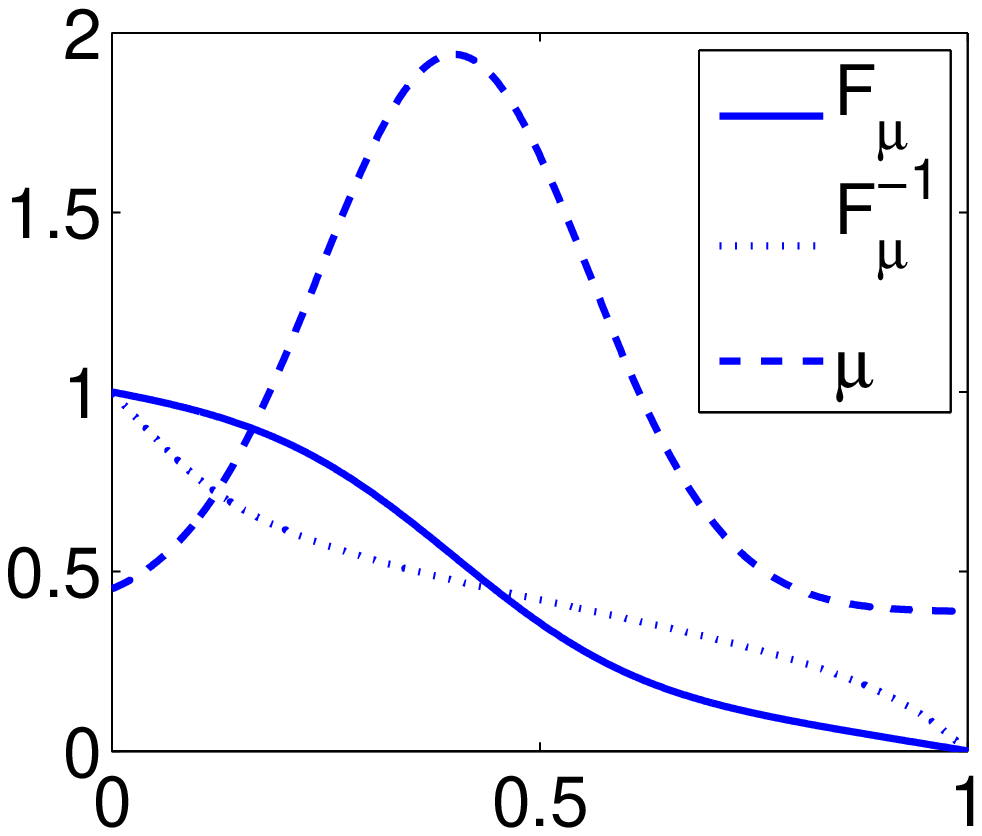}
\includegraphics[scale=.38]{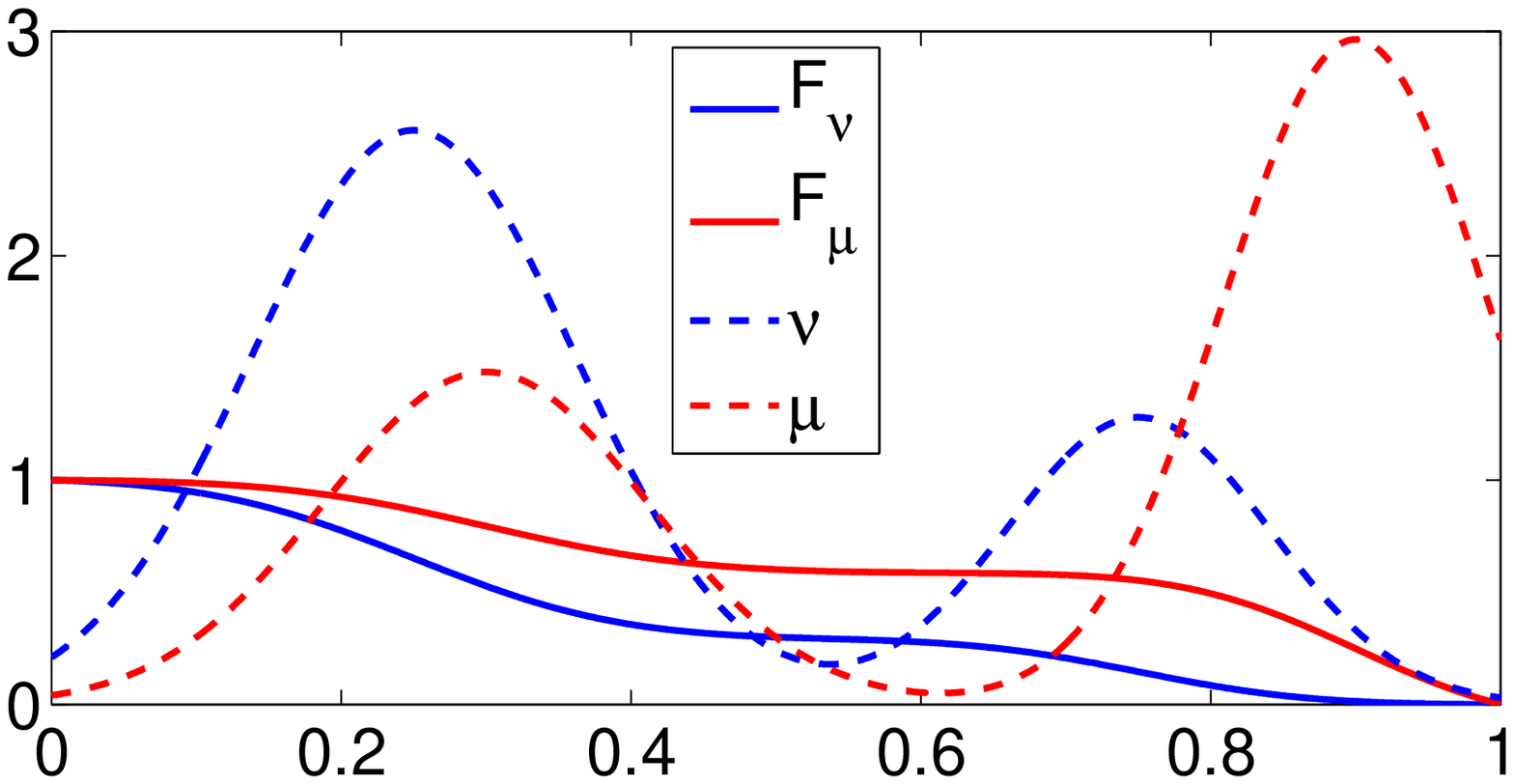}

\end{center}

\vspace*{-.4cm}

\caption{Left: cumulative and inverse cumulative distribution functions with the corresponding density (with respect to the Lebesgue measure). Right: cumulative functions for two distributions $\mu$ and $\nu$ such that $\mu \succcurlyeq \nu$.}
\label{fig:cumcont}
\end{figure}

The extension from $[0,1]^n$ to the set of product probability measures is obtained by considering a single threshold $t$ applied to all $n$ cumulative distribution functions, that is:
\BEQ
\label{eq:ext}
\textstyle
\forall \mu \in   \mathcal{P}([0,1])^n, \ 
\ h  (\mu_1,\dots,\mu_n) = \int_0^1 H\big[ F_{\mu_1}^{-1}(t),\dots, F_{\mu_n}^{-1}(t) \big] dt.
\EEQ
We have the following properties for a submodular function $H$: (a) it is an extension, that is, 
if for all $i$, $\mu_i$ is a Dirac at $x_i$, then $h (\mu) = H(x)$; (b) it is convex; (c) minimizing $h$ on $ \mathcal{P}([0,1])^n$ and minimizing $H$ on $  [0,1]^n$ is equivalent; moreover, the minimal values are equal and $\mu$ is a minimizer if and only if $
\big[  F_{\mu_1}^{-1}(t),\dots, F_{\mu_n}^{-1}(t) \big]$ is a minimizer of $H$ for almost all $t \in [0,1]$. Thus, submodular  minimization is equivalent to a convex optimization problem in a space of uni-dimensional 
measures.

Note that the extension is defined on all tuples of measures $\mu=(\mu_1,\dots,\mu_n)$ but it can equivalently be defined through non-increasing functions from $[0,1]$ to $[0,1]$, e.g., the   representation in terms of cumulative distribution functions $F_{\mu_i}$ defined above (this representation will be used in \mysec{algo} where algorithms based on the discretization of the equivalent obtained convex problem are discussed).

\section{Isotonic Constraints and Stochastic Dominance}
\label{sec:domin}

In this paper, we consider the following problem:
\BEQ
\label{eq:Hiso}
\inf_{x \in [0,1]^n} H(x) \mbox{ such that } \forall (i,j) \in E, x_i \geqslant x_j,
\EEQ
where $E$ is the edge set of a directed acyclic graph on $\{1,\dots,n\}$ and $H$ is submodular. We denote by $\mathcal{X} \subset \rb^n$ the set of $x \in\rb^n$ satisfying the isotonic constraints.

In order to define an extension in a space of measures, we consider a specific order on measures on~$[0,1]$, namely \emph{first-order stochastic dominance}~\cite{levy1992stochastic}, defined as follows.

Given two distributions $\mu$ and $\nu$ on $[0,1]$, with (inverse) cumulative distribution functions $F_\mu$ and $F_\nu$, we have
$\mu \succcurlyeq \nu$, if and only if $\forall x \in [0,1]$, $F_\mu(x) \geqslant F_\nu(x)$, or equivalently, $\forall t \in [0,1]$, $F_\mu^{-1}(t) \geqslant F_\nu^{-1}(t)$.   As shown in the right plot of \myfig{cumcont}, the densities may still overlap. An equivalent characterization~\cite{lehmann1955ordered,dentcheva2004semi} is the existence of a joint distribution on a vector $(X,X')\in \rb^2$ with marginals $\mu(x)$ and $\nu(x')$ and such that $X \geqslant X'$ almost surely\footnote{Such a joint distribution may be built as the distribution of $(F_\mu^{-1}(T),F_\nu^{-1}(T))$, where $T$ is uniformly distributed in $[0,1]$.}.
We now prove the main proposition of the paper:
\begin{proposition}
\label{prop:iso}
We consider the convex minimization problem:
\BEQ
\label{eq:hiso}
\inf_{\mu \in \mathcal{P}([0,1])^n} h(\mu) \mbox{ such that } \forall (i,j) \in E, \mu_i \succcurlyeq \mu_j.
\EEQ
Problems in \eq{Hiso} and \eq{hiso} have the same objective values. Moreover, $\mu$ is a minimizer of \eq{hiso} if and only if $F_\mu^{-1}(t)$ is a minimizer of $H$ of \eq{Hiso} for almost all $t \in [0,1]$. 
\end{proposition}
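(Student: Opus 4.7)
The plan is to sandwich the two infima by proving $\inf_{\mu} h(\mu) \leq \inf_{x \in \mathcal{X}} H(x)$ and $\inf_{\mu} h(\mu) \geq \inf_{x \in \mathcal{X}} H(x)$, where the infima are taken subject to the respective constraint sets. The key is to observe that the stochastic dominance condition is exactly the pointwise (in $t$) isotonic condition on the inverse cumulative distribution functions, so feasible measures are precisely those whose ``slices'' $F_\mu^{-1}(t) := (F_{\mu_1}^{-1}(t), \dots, F_{\mu_n}^{-1}(t))$ lie in $\mathcal{X}$ for every $t \in [0,1]$.

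For the first direction, I would embed any feasible $x \in \mathcal{X}$ into the measure problem by taking Diracs $\mu_i = \delta_{x_i}$. Then $F_{\mu_i}^{-1}(t) = x_i$ for $t \in (0,1]$, so the constraint $\mu_i \succcurlyeq \mu_j$ is equivalent to $x_i \geq x_j$, and the extension property already established in \mysec{review} gives $h(\mu) = H(x)$. Taking the infimum over $x \in \mathcal{X}$ yields $\inf_\mu h(\mu) \leq \inf_{x \in \mathcal{X}} H(x)$.

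For the reverse direction, fix any $\mu$ feasible for \eq{hiso}. By the equivalent reformulation of stochastic dominance in terms of $F^{-1}$, for every $t \in [0,1]$ and every $(i,j) \in E$ we have $F_{\mu_i}^{-1}(t) \geq F_{\mu_j}^{-1}(t)$, so $F_\mu^{-1}(t) \in \mathcal{X}$ for all $t$. Plugging into the definition \eq{ext} of the extension gives
\BEQ
h(\mu) = \int_0^1 H\bigl( F_\mu^{-1}(t) \bigr) \, dt \;\geq\; \int_0^1 \Bigl( \inf_{y \in \mathcal{X}} H(y) \Bigr) dt = \inf_{y \in \mathcal{X}} H(y),
\notag
\EEQ
which establishes the matching inequality and hence equality of the two optimal values.

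The characterization of minimizers follows from the saturation condition in the last display. If $\mu$ solves \eq{hiso}, then $H(F_\mu^{-1}(t)) = \inf_\mathcal{X} H$ for almost every $t$, so $F_\mu^{-1}(t)$ is a minimizer of $H$ on $\mathcal{X}$ for a.e.\ $t$. Conversely, if $F_\mu^{-1}(t)$ minimizes $H$ on $\mathcal{X}$ for a.e.\ $t$, then the isotonic inequalities $F_{\mu_i}^{-1}(t) \geq F_{\mu_j}^{-1}(t)$ hold for a.e.\ $t$ and, by right-continuity of $F_{\mu_i}^{-1}$ and $F_{\mu_j}^{-1}$, for every $t$; thus $\mu$ is feasible and $h(\mu)$ equals $\inf_\mathcal{X} H$. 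No step appears to be a serious obstacle here: the whole argument really rests on the single observation that stochastic dominance decouples the isotonic constraint across the threshold variable $t$, after which the result is a direct consequence of the extension properties already recalled in \mysec{review}.
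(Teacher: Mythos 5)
Your proof is correct and follows essentially the same route as the paper's: embed feasible points via Dirac measures for one inequality, use the pointwise (in $t$) feasibility of the slices $F_\mu^{-1}(t)$ implied by stochastic dominance for the reverse inequality, and read off the minimizer characterization from the equality cases. You merely spell out a few details the paper leaves implicit, such as the right-continuity argument upgrading the almost-everywhere isotonicity to feasibility of $\mu$ in the converse direction.
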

\begin{proof}
We denote by $\mathcal{M}$ the set of $\mu \in \P([0,1])^n$ satisfying the stochastic ordering constraints.
For any $x\in [0,1]^n$ that satisfies the constraints in \eq{Hiso}, i.e., $x \in \mathcal{X} \cap [0,1]^n$, the associated Dirac measures satisfy the constraint in \eq{hiso}. Therefore, the objective value $M$ of \eq{Hiso} is greater or equal to the one $M'$ of \eq{hiso}.
Given a minimizer $\mu$ for the convex problem in \eq{hiso}, we have:
$
M \geqslant M' =  h(\mu) =   \int_0^1 H \big[  F_{\mu_1}^{-1}(t),\dots, F_{\mu_n}^{-1}(t) \big] dt  \geqslant \int_0^1M  dt = M.
$
This shows the proposition by studying the equality cases above.
\end{proof}
Alternatively, we could add the penalty term
$
\lambda \sum_{(i,j) \in \E} \int_{-\infty}^{+\infty} ( F_{\mu_j}(z) - F_{\mu_i}(z))_+ dz,
$
which corresponds to the unconstrained minimization of $H(x) + \lambda \sum_{(i,j) \in \E} (x_j - x_i)_+$.
For $\lambda >0$ big enough\footnote{A short calculation shows that when $H$ is differentiable, the first order-optimality condition (which is only necessary here) implies that if $\lambda$ is strictly larger than $n$ times the largest possible partial first-order derivative of~$H$, the isotonic constraints have to be satisfied.}, this is equivalent to the problem above, but with a submodular function which has a large Lipschitz constant (and is thus harder to optimize with the iterative methods presented below).

\section{Discretization algorithms}
\label{sec:algo}

Prop.~\ref{prop:iso} shows that the isotonic regression problem with a submodular cost can be cast as a convex optimization problem; however, this is achieved in a space of measures, which cannot be handled directly computationally in polynomial time. Following~\cite{bach2015submodular}, we consider a polynomial time and space discretization scheme of each interval $[0,1]$ (and \emph{not} of $[0,1]^n$), but we propose in \mysec{algonew} a significant improvement that allows to reduce the number of discrete points significantly.

\subsection{Review of optimization of submodular optimization in discrete domains}
\label{sec:revdis}

All our algorithms will end up minimizing approximately a submodular function $F$ on 
$\{0,\dots,k-1\}^n$, that is, which satisfies \eq{submoddef}. Isotonic constraints will be added in \mysec{naiveiso}.

Following~\cite{bach2015submodular}, this can be formulated as minimizing a convex function $f_\downarrow$ on the set of $\rho \in [0,1]^{n \times (k-1)}$ so that for each $i \in \{1,\dots,n\}$, $(\rho_{ij})_{j \in \{1,\dots,k-1\}}$ is a non-increasing sequence (we denote by $\mathcal{S}$ this set of constraints) corresponding to the cumulative distribution function. For any feasible $\rho$, a subgradient of $f_\downarrow$ may be computed by sorting all $n(k-1)$ elements of the matrix $\rho$ and computing at most $n(k-1)$ values of $F$. An approximate minimizer of $F$ (which exactly inherits approximation properties from the approximate optimality of $\rho$) is then obtained by selecting the minimum value of $F$ in the computation of the subgradient. Projected subgradient methods can then be used, and if $\Delta F$ is the largest absolute difference in values of $F$ when a single variables is changed by   $\pm 1$, we obtain an $\varepsilon$-minimizer (for function values) after $t$ iterations, with $\varepsilon \leqslant  { nk \Delta F }/ {\sqrt{t}}$. 
The projection step is composed of $n$ simple separable quadratic isotonic regressions with chain constraints in dimension $k$, which can be solved easily in $O( nk )$ using the pool-adjacent-violator algorithm~\cite{best1990active}. Computing a subgradient requires a sorting operation, which is thus $O( nk \log ( nk ) )$.
See  more details in~\cite{bach2015submodular}.

Alternatively, we can minimize $f_\downarrow(\rho) + \frac{1}{2} \| \rho \|^2_F$ on the set of 
$\rho \in\rb^{n \times (k-1)}$ so that for each $i$, $(\rho_{ij})_j$ is a non-increasing sequence, that is, $\rho \in \mathcal{S}$ (the constraints that $\rho_{ij} \in [0,1]$ are dropped).
We then get a minimizer $z$ of $F$ by looking for all $i \in \{1,\dots,n\}$ at the largest $j \in \{1,\dots,k-1\}$ such that $\rho_{ij} \geqslant 0$. We take then $z_i=j$ (and if no such $j$ exists, $z_i=0$). A gap of $\varepsilon$ in the problem above, leads to a gap of
$\sqrt{\varepsilon nk}$ for the original problem (see more details in~\cite{bach2015submodular}). The subgradient  method in the primal, or Frank-Wolfe algorithm in the dual may be used for this problem.  We obtain an $\varepsilon$-minimizer (for function values) after $t$ iterations, with $\varepsilon \leqslant  { \Delta F }/{ {t}}$, which leads for the original submodular minimization problem to the same optimality guarantees as  above, but with a faster algorithm in practice. See the detailed computations and comparisons in~\cite{bach2015submodular}.

\subsection{Naive discretization scheme}
\label{sec:naiveiso}
\label{sec:naive}
Following~\cite{bach2015submodular}, we simply discretize $[0,1]$ by selecting the $k$ values $\frac{i}{k-1}$ or
 $\frac{2 i + 1}{2k}$,  for $i \in \{0,\dots,k-1\}$. If the function $H:[0,1]^n$ is $L_1$-Lipschitz-continuous with respect to the $\ell_1$-norm, that is $| H(x)-H(x')| \leqslant L_1 \|x - x'\|_1$, the function $F$ is $(L_1/k)$-Lipschitz-continuous with respect to the $\ell_1$-norm (and thus we have $\Delta F \leqslant L_1/k$ above). Moreover, if $F$ is minimized up to $\varepsilon$, $H$ is optimized up to $\varepsilon + nL_1/k$.

 In order to take into account the isotonic constraints, we simply minimize with respect to $\rho \in
[0,1]^{n \times (k-1)} \cap \mathcal{S}$, with the additional constraint that for all $j \in \{1,\dots,k-1\}$, $\forall (a,b) \in E$, $\rho_{a,j} \geqslant \rho_{b,j}$.
This corresponds to additional contraints $\mathcal{T}\subset \rb^{n \times (k-1)}$.

Following \mysec{revdis}, we can either choose to solve the convex problem $
\min_{\rho \in [0,1]^{n \times k} \cap \mathcal{S} \cap \mathcal{T}}f_\downarrow(\rho)
$,
or the strongly-convex problem $
\min_{\rho \in  \mathcal{S} \cap \mathcal{T}}f_\downarrow(\rho) + \frac{1}{2} \| \rho \|^2_F$. In the two situations, after $t$ iterations, that is $t nk $ accesses to values of $H$, we get a constrained minimizer of $H$ with approximation guarantee $ {nL_1}/{k} +  n L_1/\sqrt{  {t}}$. Thus in order to get a precision $\varepsilon$, it suffices to select $k \geqslant 2nL_1 / \varepsilon$ and $t \geqslant  {4n^2 L_1^2}/{\varepsilon^2}$, leading to an overall $  { 8 n^4 L_1^3}/ {\varepsilon^3}$ accesses to function values of $H$, which is the same as obtained in~\cite{bach2015submodular} (except for an extra factor of $n$ due to a different definition of $L_1$).

\subsection{Improved behavior for smooth functions}
\label{sec:imp}
We consider the discretization points $\frac{i}{k-1}$  for $i \in \{0,\dots,k-1\}$, and  we assume that all first-order (resp.~second-order) partial derivatives are bounded by $L_1$ (resp.~$L_2^2$). In the reasoning above, we may upper-bound the infimum of the discrete function in a finer way, going from $\inf_{x \in \mathcal{X}} H(x) +  {n L_1}/{k}$ to
 $\inf_{x \in \mathcal{X}} H(x) + \frac{1}{2} {n^2 L_2^2}/ {k^2}$ (by doing a Taylor expansion around the global optimum, where the first-order terms are always zero, either because the partial derivative is zero or the deviation is zero). We now select 
 $k \geqslant  nL_2 / \sqrt{\varepsilon}$, leading to a number of accesses to $H$ that scales as 
$   4 n^4 L_1^2 L_2 / \varepsilon^{5/2}$. We thus gain a factor $\sqrt{\varepsilon}$ with the exact same algorithm, but different assumptions.

 \subsection{Algorithms for isotonic problem}
Compared to plain submodular minimization where we need to project onto $\mathcal{S}$, we need to take into account the extra isotonic constraints, i.e., $\rho \in \mathcal{T}$, and thus use  more complex orthogonal projections.

\paragraph{Orthogonal projections.}
We now require the orthogonal projections on $\mathcal{S} \cap \mathcal{T}$ or 
$ [0,1]^{n \times k} \cap \mathcal{S} \cap \mathcal{T}$, which are themselves isotonic regression problems with $nk$ variables. If there are $m$ original isotonic constraints in \eq{Hiso}, the number of isotonic constraints for the projection step is $O( nk+mk)$, which is typically $O(mk)$ if $m \geqslant n$, which we now assume. Thus, we can use existing parametric max-flow algorithms which can solve these in $O( nmk^2 \log (nk))$~\cite{hochbaum2008pseudoflow} or in $O( nmk^2 \log (n^2 k/ m ) )$~\cite{gallo1989fast}. Alternatively, we can explicitly consider a sequence of max-flow problems (with at most $\log ({1}/{\varepsilon})$ of these, where $\varepsilon$ is the required precision)~\cite{tarjan2006balancing,jegelka2013reflection}. Finally, we may consider (approximate) alternate projection algorithms such as Dykstra's algorithm and its accelerated variants~\cite{acc-dyk}, since the set $\mathcal{S}$ is easy to project to, while, in some cases, such as chain isotonic constraints for the original problem, $\mathcal{T}$ is easy to project to.

Finally, we could also use algorithms dedicated to special structures for isotonic regression (see~\cite{stout2013isotonic}), in particular when our original set of isotonic constraints in \eq{Hiso} is a chain, and the orthogonal projection corresponds to a two-dimensional grid~\cite{spouge2003least}. In our experiments, we use a standard max-flow code~\cite{boykov2004experimental} and the usual divide-and-conquer algorithms~\cite{tarjan2006balancing,jegelka2013reflection} for parametric max-flow.

\paragraph{Separable problems.}
The function  $f_\downarrow$ from \mysec{naive} is then  a linear function of the form $f_\downarrow(\rho) = \tr w^\top \rho$, and the strongly-convex problem becomes the one of minimizing $
\min_{\rho \in  \mathcal{S} \cap \mathcal{T}} \textstyle \frac{1}{2} \| \rho + w \|_F^2, $
which is simply the problem of projecting on the intersection of two convex sets, for which an accelerated Dykstra algorithm may be used~\cite{acc-dyk}, with convergence rate in $O(1/t^2)$ after $t$ iterations. Each step is $O(kn)$ for projecting onto $\mathcal{S}$, while this is $k$ parametric network flows with $n$ variables and $m$ constraints for projecting onto $\mathcal{T}$,  in $ {O}(knm \log n )$  for the general case and $O(kn)$   for chains and rooted trees~\cite{best1990active,yu2016exact}.

In our experiments in \mysec{exp}, we show that Dykstra's algorithm converges quickly for separable problems. Note that when the underlying losses are convex, then Dykstra converges in a \emph{single} iteration. Indeed,  in this situation, the sequences $(-w_{ij})_j$ are non-increasing  and isotonic regression along a direction preserves decreasingness in the other direction, which implies that after two alternate projections, the algorithm has converged to the optimal solution.

Alternatively, for the non-strongly convex formulation, this is a single network flow problem with $n(k-1)$ nodes, and $mk$ constraints, in thus ${O}(nmk^2 \log (nk) )$~\cite{sleator1983data}. When $E$ corresponds to a chain, then this is a 2-dimensional-grid with an algorithm in $O(n^2 k^2)$~\cite{spouge2003least}. For a precision $\varepsilon$, and thus $k$ proportional to $ n/\varepsilon$ with the assumptions of \mysec{naive}, this makes a number of function calls for $H$, equal to $O(kn) = O(n^2/\varepsilon)$ and a running-time complexity of $ {O}( n^3 m / \varepsilon^2 \cdot \log( n^2 / \varepsilon) )$---for smooth functions, as shown in \mysec{imp}, we get $k$ proportional to $n/\sqrt{\varepsilon}$ and thus an improved behavior.

\section{Improved discretization algorithms}
\label{sec:algonew}
We now consider a different discretization scheme that can take advantage of access to higher-order derivatives.
We divide $[0,1]$ into $k$ disjoint pieces $A_0 = [0,\frac{1}{k })$, $A_1 = [\frac{1}{k},\frac{2}{k} )$, $\dots$, $A_{k-1}=[ \frac{k-1}{k},1]$. This defines a new function $\tilde{H}: \{0,\dots,k-1\}^n \to \rb$ defined \emph{only} for elements $z \in  \{0,\dots,k-1\}^n$ that satisfy the isotonic constraint, i.e., $z \in \{0,\dots,k-1\}^n \cap \mathcal{X}$:
\BEQ
\label{eq:Htilde}
\textstyle
\tilde{H}(z) = \min_{ x \in \prod_{i=1}^n A_{z_i} } H(x) \mbox{ such that } \forall (i,j) \in E,  x_i \geqslant x_j.
\EEQ
The function $\tilde{H}(z)$ is equal to $+\infty$ if $z$ does not satisfy the isotonic constraints.

\begin{proposition}
The function $\tilde{H}$ is submodular, and minimizing $\tilde{H}(z)$ for $z \in \{0,\dots,k-1\}^n$ such that $\forall (i,j) \in E, z_i \geqslant z_j$ is equivalent to minimizing \eq{Hiso}.
\end{proposition}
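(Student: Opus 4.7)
The statement bundles two claims — submodularity of $\tilde H$ and equivalence of the two optimization problems — and I would prove them separately, with the submodularity being the only substantive step. Throughout, write $\mathcal X_k = \{0,\dots,k-1\}^n \cap \mathcal X$ for the feasible set of the discrete problem, and $z \wedge z'$, $z \vee z'$ for componentwise $\min$, $\max$.

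For submodularity, the plan is the standard lattice/exchange argument adapted to the boxes $A_{z_i}$. Fix $z, z' \in \{0,\dots,k-1\}^n$. If either is infeasible then $\tilde H(z)+\tilde H(z') = +\infty$ and submodularity is vacuous, so assume $z,z' \in \mathcal X_k$. I would first observe that $\mathcal X$ is a sublattice of $\mathbb R^n$: if $x,x'$ both satisfy $x_i \geq x_j$ and $x'_i \geq x'_j$, then so do $x \wedge x'$ and $x \vee x'$ by monotonicity of $\min$ and $\max$; this stability automatically gives $z \wedge z',\, z \vee z' \in \mathcal X_k$. Next, pick (near-)minimizers $x$ of $\tilde H(z)$ and $x'$ of $\tilde H(z')$ in \eq{Htilde}. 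The crucial box-compatibility observation is: when $z_i < z'_i$, necessarily $x_i < (z_i{+}1)/k \leq z'_i/k \leq x'_i$, so $\min(x_i,x'_i) = x_i \in A_{z_i}$ and $\max(x_i,x'_i) = x'_i \in A_{z'_i}$; when $z_i = z'_i$ both coordinates already lie in $A_{z_i}$. Therefore $x \wedge x' \in \prod_i A_{(z \wedge z')_i}$ and $x \vee x' \in \prod_i A_{(z \vee z')_i}$, and both are isotonic by the first observation. Submodularity of $H$ then yields
\[
\tilde H(z \wedge z') + \tilde H(z \vee z') \;\leq\; H(x \wedge x') + H(x \vee x') \;\leq\; H(x) + H(x') \;=\; \tilde H(z) + \tilde H(z').
\]

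For the equivalence, both inequalities are nearly immediate. Given a feasible $x$ for \eq{Hiso}, let $z_i$ be the unique index with $x_i \in A_{z_i}$; the same interval comparison used above shows $x_i \geq x_j \Rightarrow z_i \geq z_j$, so $z \in \mathcal X_k$, and by definition $\tilde H(z) \leq H(x)$, giving $\inf_{\mathcal X_k} \tilde H \leq \inf_{\mathcal X \cap [0,1]^n} H$. Conversely, for each $z \in \mathcal X_k$ the value $\tilde H(z)$ is an infimum of $H$ over a nonempty subset of the feasible set of \eq{Hiso}, so $\tilde H(z) \geq \inf_{\mathcal X \cap [0,1]^n} H$.

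The only delicate point I expect is purely topological: the boxes $A_{z_i}$ are half-open, so the infimum in \eq{Htilde} may only be attained in the closure. This is harmless — $H$ is continuous, and replacing each $A_{z_i}$ by $\overline{A_{z_i}}$ preserves both the infimal value and the box-compatibility under $\wedge,\vee$ — so I would either work on the closures throughout or take near-minimizers and pass to the limit. No combinatorial content is affected.
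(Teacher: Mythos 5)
Your proof is correct and takes essentially the same route as the paper's: pick (near-)minimizers $x,x'$ for $z,z'$, apply submodularity of $H$ to $x\wedge x'$ and $x\vee x'$, and observe that these are feasible for the problems defining $\tilde H(z\wedge z')$ and $\tilde H(z\vee z')$. The paper's own proof is a terse three-line version of this that leaves the box-compatibility check, the attainment/half-open-interval issue, and the equivalence of infima entirely implicit, all of which you spell out correctly.
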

\begin{proof}
We consider $z$ and $z'$ that satisfy the isotonic constraints, with minimizers $x$ and $x'$ in the definition in \eq{Htilde}.  
We have $H(z)+H(z') = H(x)+H(x') 
\geqslant H(\min \{x,x'\}) + H(\max \{x,x'\})
\geqslant H(\min \{z,z'\}) + H(\max \{z,z'\})
$. Thus it is submodular on the sub-lattice $\{0,\dots,k-1\}^n \cap \mathcal{X}$.
\end{proof}
Note that in order to minimize $\tilde{H}$, we need to make sure that we only access $H$ for elements $z$ that satisfy the isotonic constraints, that is $\rho \in \mathcal{S} \cap \mathcal{T}$ (which our algorithms impose).

\subsection{Approximation from high-order smoothness}
\label{sec:taylor}
The main idea behind our discretization scheme is to use high-order smoothness to approximate for any required $z$, the function value $\tilde{H}(z)$. If we assume that $H$ is $q$-times differentiable, with uniform bounds $L_r^r$ on all $r$-th order derivatives, then, 
the $(q\!-\!1)$-th order Taylor expansion of $H$ around $y$ is equal to
$H_q(x|y)=
H(y) + \sum_{r=1}^{q-1} \sum_{|\alpha| = r } \frac{1}{\alpha!} (x-y)^\alpha H^{(\alpha)}(y) $, where $\alpha \in \mathbb{N}^n$ and $|\alpha|$ is the sum of elements, $(x-y)^\alpha$ is the vector with components $(x_i-y_i)^{\alpha_i}$, $\alpha !$ the products of all factorials of elements of $\alpha$, and 
$H^{(\alpha)}(y) $ is the partial derivative of $H$ with order $\alpha_i$ for each~$i$.

We thus approximate $\tilde{H}(z)$, for any $z$ that satisfies the isotonic constraint (i.e., $z \in \mathcal{X}$), by $\hat{H}(z) = \min_{ x  \in  ( \prod_{i=1}^n A_{z_i} ) \cap \mathcal{X} }
H_q(x | \frac{z+1/2}{k} )$. We have for any $z$,
$|\tilde{H}(z) - \hat{H}(z) |  \leqslant  {( n L_q / 2 k)^q}/ {q!}$. Moreover, when moving a single element of $z$ by one, the maximal deviation is
$L_1 / k  + 2  {( n L_q / 2 k)^q}/{q!}$. The algorithm is then simply as follows: run the projected subgradient method on the extension of $\tilde{H}$ (to a space of measures, like described in \mysec{review}).

If $\hat{H}$ is submodular, then the same reasoning as in \mysec{naive} leads to an approximate error of
$({nk}/ {\sqrt{t}}) \big( L_1 / k  + 2  {( n L_q / 2 k)^q}/{q!} \big) $ after $t$ iterations, on top of 
$  {( n L_q / 2 k)^q}/{q!}$, thus, with $t \geqslant 16 n^2 L_1^2 / \varepsilon^2$ and
$k \geqslant (   q! \varepsilon / 2 )^{-1/q} nL_q / 2  $ (assuming $\varepsilon$ small enough such that $ t \geqslant 16 n^2 k^2$), this leads to a number of accesses to the $(q\!-\!1)$-th order oracle equal to $O( n^4L_1^2 L_q  / \varepsilon^{2+1/q})$. We thus get an improvement in the power of $\varepsilon$, which tend to $\varepsilon^{-2}$ for infinitely smooth problems. Note that when $q=1$ we recover the same rate as in \mysec{imp} (with the same assumptions but a slightly  different algorithm).  

However, unless $q=1$, the function  $\hat{H}(z)$ is not submodular, and we cannot apply directly the bounds for convex optimization of the extension. We show in Appendix~\ref{app:approx} that the bound still holds for $q>1$ by using the special structure of the convex problem.

What remains unknown is the computation of $\hat{H}$ which requires to minimize polynomials on a small cube. We can always use the generic algorithms from \mysec{naive} for this, which do not access extra function values but can be slow. For quadratic functions, we can use a convex relaxation which is not tight but already allows strong improvements with much faster local steps, and which we now present.

\subsection{Quadratic problems}
  
 \label{sec:quad}
In this section, we consider the minimization of a quadratic submodular function 
$H(x) =  \frac{1}{2} x^\top A x + c^\top x$ (thus with all off-diagonal elements of $A$ non-negative) on $[0,1]^n$, subject to isotonic constraints $x_i \geqslant x_j$ for all $(i,j) \in E$. This can be solved iteratively (and approximately) with the algorithm from \mysec{naive}; in this section, we consider a semidefinite relaxation which is tight for certain problems ($A$ positive semi-definite, $c$ non-positive, or $A$ with non-positive diagonal elements), but not in general (we have found counter-examples but it is most often tight).

The relaxation is based on considering the set of $(Y,y) \in \rb^{n\times n} \times \rb^n$ such that there exists $x \in [0,1]^n \cap \mathcal{X}$  with $Y = xx^\top$ and $y=x$. Our problem is thus equivalent to minimizing $\frac{1}{2} \tr A Y + c^\top y$ such that $(Y,y)$ is in the convex-hull $\mathcal{Y}$ of this set, which  is NP-hard to characterize in polynomial time~\cite{deza2009geometry}. However, we can find a simple relaxation   by considering the following constraints: (a)~for all $i \neq j$,  $\left( \!\!
\begin{array}{ccc}
Y_{ii} \!\!& Y_{ij}\!\! & y_i\!\! \\[-.025cm] Y_{ij} \!\! & Y_{jj} \!\! & y_j \!\! \\[-.025cm] y_i & y_j & 1
\end{array}\!\! \right) $ 
 is 
  positive semi-definite, (b) for all $i \neq j$, $Y_{ij} \leqslant \inf \{ y_i, y_j\}$, which corresponds to $x_i x_j \leqslant \inf \{x_i,x_j\}$ for any $x \in [0,1]^n$, (c) for all $i$, $Y_{ii} \leqslant y_i$, which corresponds to $x_i^2 \leqslant x_i$, and (d) for all $(i,j) \in E$, $y_i \geqslant y_j$, $Y_{ii} \geqslant Y_{jj}$,   $Y_{ij} \geqslant \max \{ Y_{jj}, y_j - y_i + Y_{ii} \}$  and $Y_{ij} \leqslant \max \{ Y_{ii}, y_i - y_j + Y_{jj} \}$, which corresponds to 
$x_i \geqslant x_j$, $x_i^2 \geqslant x_j^2$,   $x_i x_j \geqslant x_j^2$,
$x_i(1-x_i) \leqslant x_i ( 1-x_j)$,
$x_i x_j \leqslant x_i^2$, and $x_i(1-x_j) \geqslant x_j ( 1-x_j)$.
This leads to a semi-definite program which provides a lower-bound on the optimal value of the problem. See Appendix~\ref{app:quad} for a proof of tightness for special cases and a counter-example for the tightness in general.

\section{Experiments}
\label{sec:exp}

We consider  experiments aiming at (a) showing that the new possibility of minimizing submodular functions with isotonic constraints brings new possibilities and (b) that the new dicretization algorithms are faster than the naive one.

\paragraph{Robust isotonic regression.}
Given some $z \in \rb^n$, we consider a separable function  $H(x) = \frac{1}{n} \sum_{i=1}^n G(x_i - z_i)$ with various possibilities for $G$: (a) the square loss $G(t) = \frac{1}{2}t^2$, 
(b) the absolute loss $G(t) = |t|$ and (c) a logarithmic loss $G(t) = \frac{\kappa^2}{2} \log \big(
 1 + t^2 / \kappa^2
\big)$, which is the negative log-density of a Student distribution and non-convex. 
The robustness properties of such approaches have been well studied (see, e.g.,~\cite{hampel2011robust,huber2011robust} and references therein). Our paper shows that these formulations are computationally tractable.

The first two losses may be dealt with methods for separable convex isotonic regression~\cite{luss2014generalized,yu2016exact}, but the non-convex loss can only dealt with exactly by the new optimization routine that we present---majorization-minimization algorithms~\cite{hunter2004tutorial} based on the concavity of $G$ as a function of~$t^2$ can be used with such non-convex losses, but as shown below, they converge to bad local optima.

For simplicity, we consider chain constraints $1 \geqslant x_1 \geqslant x_2 \geqslant \cdots \geqslant x_n \geqslant 0$. We consider two set-ups: (a) a separable set-up where maximum flow algorithms can be used directly (with  $n=200$), and (b) a general submodular set-up (with  $n=25$ and $n=200$), where we add a smoothness penalty which is the sum of terms of the form $\frac{\lambda}{2} \sum_{i=1}^{n-1} ( x_i - x_{i+1})^2$, which is submodular (but not separable).

\begin{figure}
\begin{center}
\vspace*{-.1cm}
\includegraphics[scale=.38]{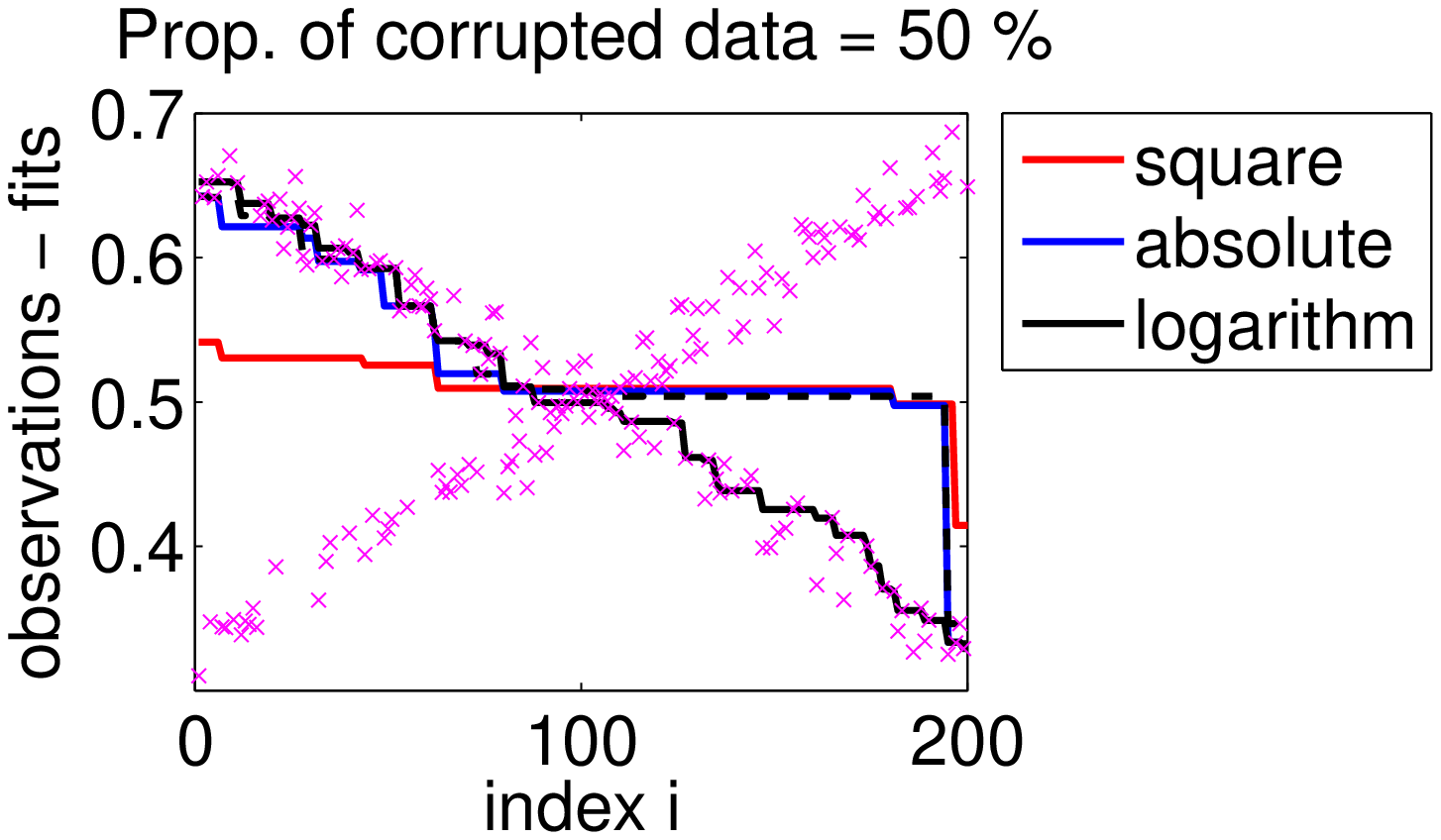} \hspace*{.5cm} 
\includegraphics[scale=.38]{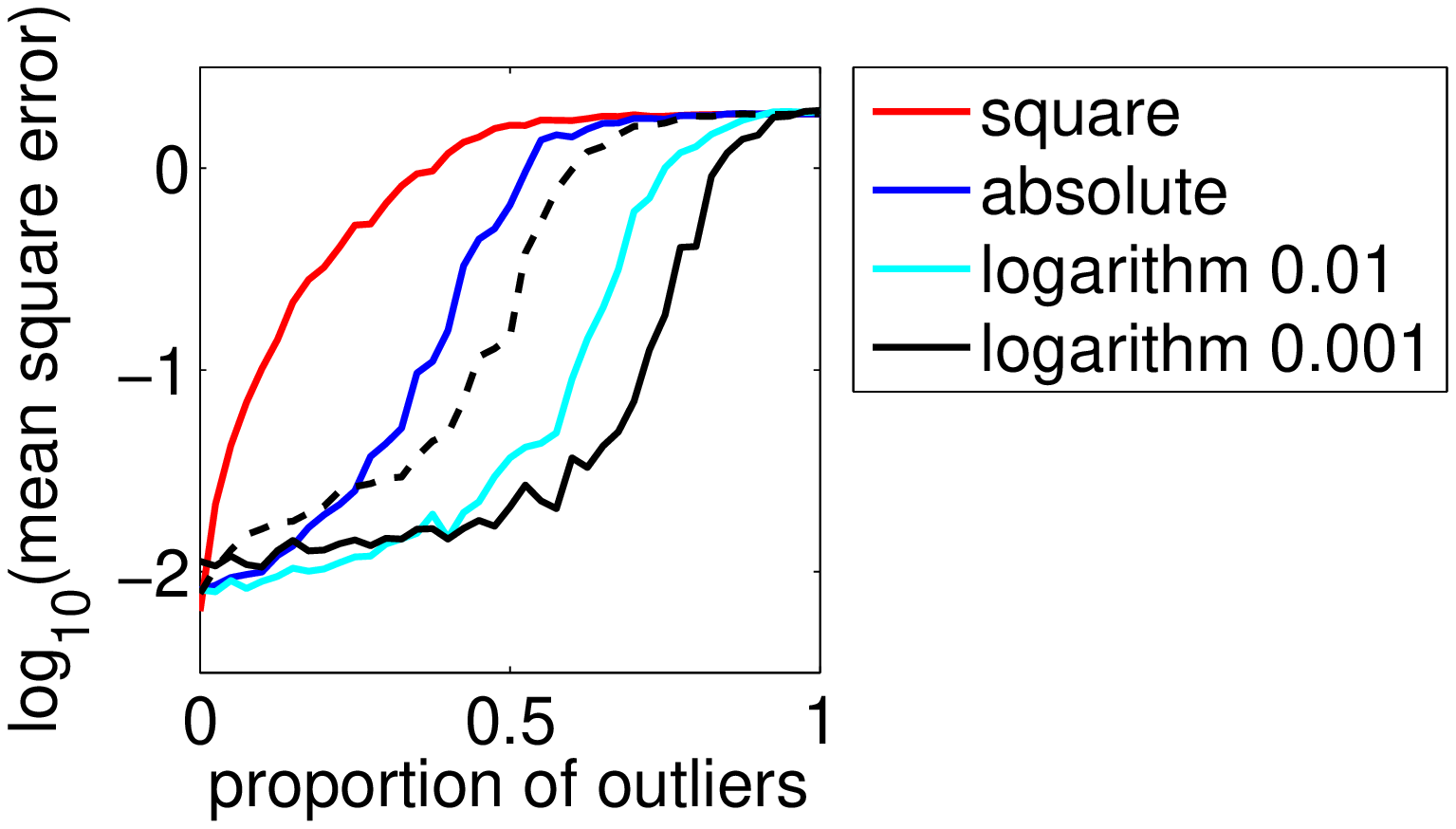}
\vspace*{-.2cm}
\caption{Left: robust isotonic regression with decreasing constraints, 
with $50 \%$ of corrupted data
(observation in pink crosses, and results of isotonic regression with various losses in red, blue and black); the dashed black line corresponds to majorization-minimization algorithm started from the observations. Right: robustness of various losses to the proportion of corrupted data. The two logarithm-based losses are used with two values of $\kappa$ ($0.01$ and $0.001$); the dashed line corresponds to the majorization-minimization algorithm (with no convergence guarantees and worse performance).
\label{fig:perfs_robust}}

\vspace*{-.1cm}

\end{center}
\end{figure}

\paragraph{Optimization of separable problems with maximum flow algorithms.}
We solve the discretized version by a single maximum-flow problem of size $nk$. We compare the various losses for $k=1000$ on data which is along a decreasing  line (plus noise), but corrupted (i.e., replaced for a certain proportion) by data along an increasing line. See an example in the left plot of \myfig{perfs_robust} for $50 \%$ of corrupted data.
We see  that the square loss is highly non robust, while the (still convex) absolute loss is slightly more robust, and the robust non-convex loss still approximates the decreasing function correctly with $50\%$ of corrupted data when optimized globally, while the method with no guarantee (based on majorization-minimization, dashed line) does not converge to an acceptable solution. In Appendix~\ref{app:results}, we show additional examples where it is robust up to $75 \%$ of corruption.

 In the right plot of \myfig{perfs_robust}, we also show the robustness to an increasing proportion of outliers (for the same type of data as for the left plot), by plotting the mean-squared error in log-scale and averaged over 20 replications. Overall, this shows the benefits of non-convex isotonic regression with guaranteed global optimization, even for large proportions of corrupted data.

\paragraph{Optimization  of separable problems with pool-adjacent violator (PAV) algorithm.}
As shown in \mysec{naiveiso}, discretized separable submodular optimization corresponds to the orthogonal projection of a matrix into the intersection of horizontal and vertical chain constraints. This can be done by Dykstra's alternating projection algorithm or its accelerated version~\cite{acc-dyk}, for which each projection step can be performed with the PAV algorithm because each of them corresponds to chain constraints. 

In the left plot of
\myfig{perfs_dykstra}, we show the difference in function values (in log-scale) for various discretization levels (defined by the integer $k$ spaced by $1/4$ in base-10 logarithm), as as function of the number of iterations (averaged over 20 replications). For large $k$ (small difference of function values), we see a spacing between the ends of the plots of approximatively $1/2$, highlighting the dependence in $1/k^2$ of the final error with discretization $k$, which our analysis in \mysec{imp} suggests.

\paragraph{Effect of the discretization for separable problems.}
In order to highlight the effect of discretization and its interplay with differentiability properties of the function to minimize, we consider in the middle plot of \myfig{perfs_dykstra}, the distance in function values after full optimization of the discrete submodular function for various values of $k$. We see that for the simple smooth function (quadratic loss), we have a decay in $1/k^2$, while for the simple non smooth function (absolute loss), we have a final decay in $1/k$), a predicted by our analysis. For the logarithm-based loss, whose smoothness constant depends on $\kappa$, when $\kappa$ is large, it behaves like a smooth function immediately, while for $\kappa$ smaller, $k$ needs to be large enough to reach that behavior.

\begin{figure}
\begin{center}

\vspace*{-.1cm}

\hspace*{-.5cm}
\includegraphics[scale=.38]{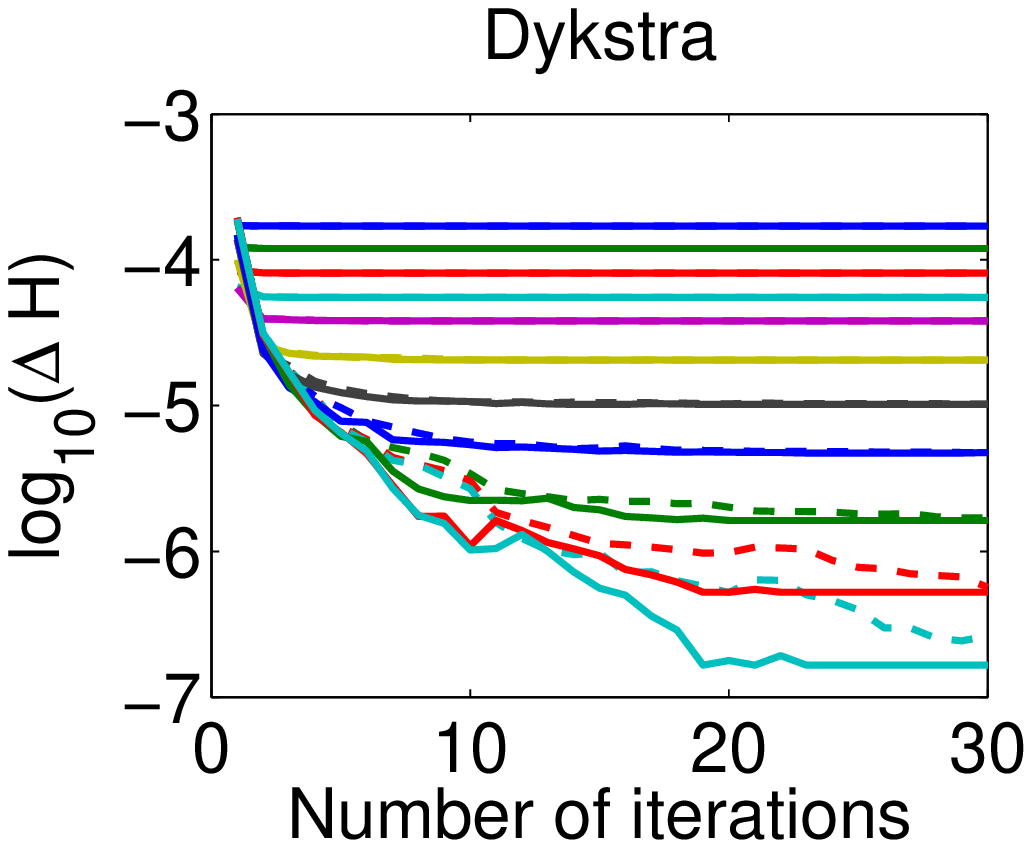} \hspace*{.1cm}
\includegraphics[scale=.38]{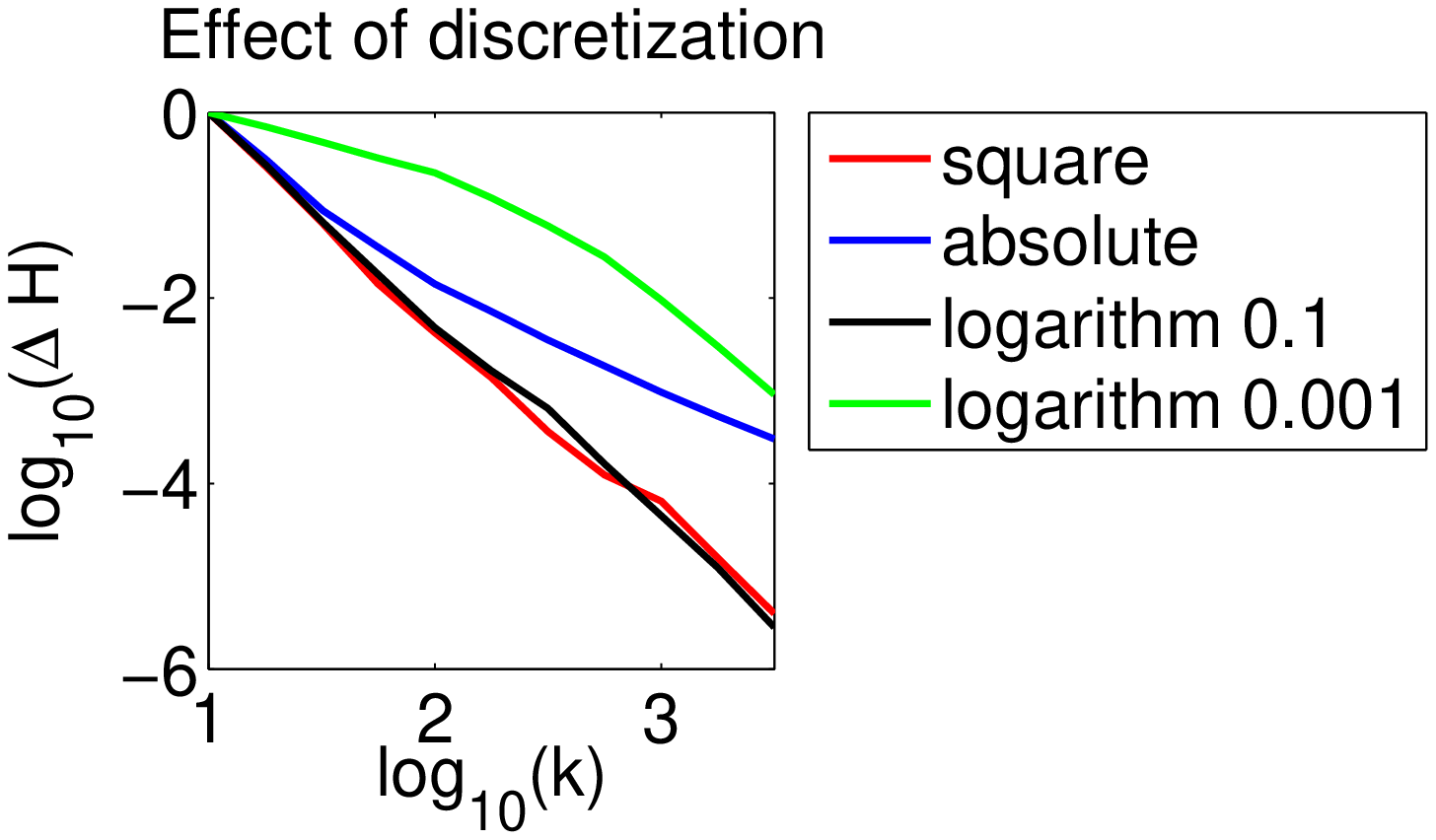} \hspace*{.1cm}
\includegraphics[scale=.38]{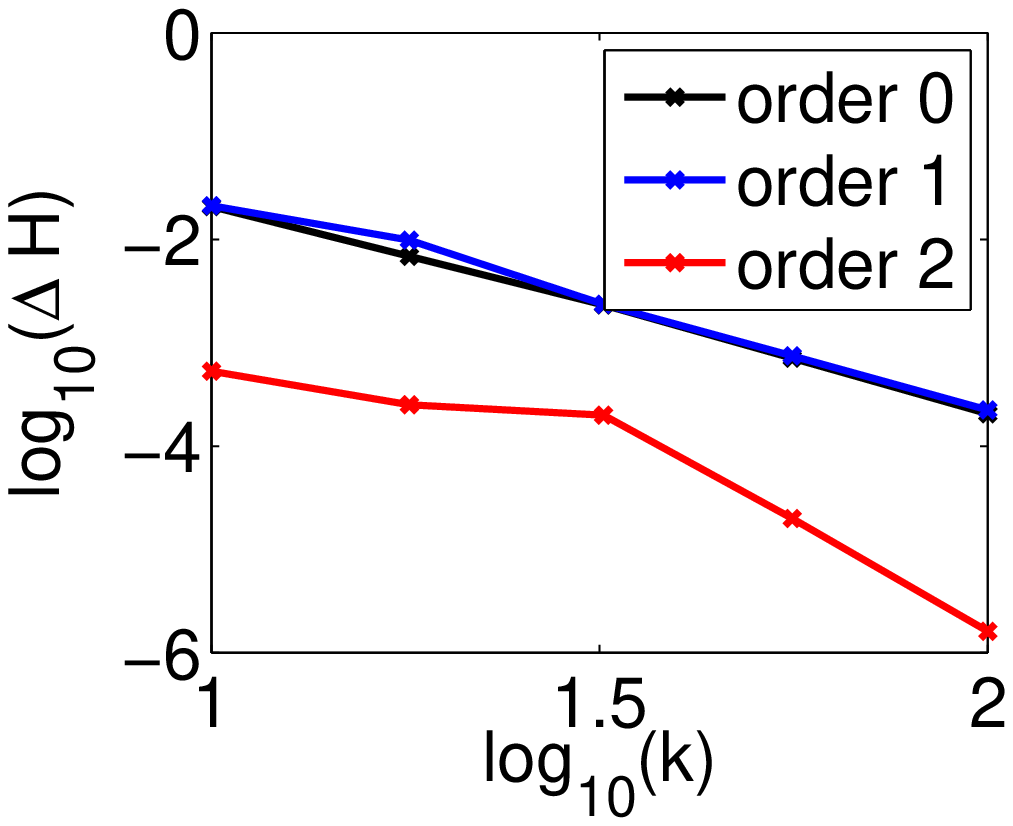}
\hspace*{-.5cm}

\vspace*{-.2cm}

\caption{Dykstra's projection algorithms for separable problems, with several values of $k$, spaced with $1/4$ in base-10 logarithm, from $10^{-1}$ to $10^{-3.5}$. Dykstra in dashed and accelerated Dykstra in plain. Middle: effect of discretization value $k$ for various loss functions for separable problems (the logarithm-based loss is considered with two values of $\kappa$, $\kappa=0.1$ and $\kappa = 0.001$).  Right: effect of discretization $k$ on non-separable problems. \label{fig:perfs_dykstra}}

\vspace*{-.1cm}

\end{center}
\end{figure}

\paragraph{Non-separable problems.}
We consider adding a smoothness penalty  to add the prior knowledge that values should be decreasing \emph{and} close. In Appendix~\ref{app:results}, we show the effect of adding a smoothness prior (for $n=200$): it leads to better estimation. In the right plot of \myfig{perfs_dykstra}, we show the effect of various discretization schemes (for $n=25$), from order $0$ (naive discretization), to order $1$ and $2$ (our new schemes based on Taylor expansions from \mysec{taylor}), and we plot the difference in function values after 50 steps of subgradient descent. As outlined in our analysis, the first-order scheme does not help because our function has bounded Hessians, while the second-order does so significantly.

\section{Conclusion}
In this paper, we have shown how submodularity could be leveraged to obtain polynomial-time algorithms for isotonic regressions with a submodular cost, based on convex optimization in a space of measures. The final algorithms are based on discretization, which a new scheme that also provides improvements based on smoothness (also without isotonic constraints). Our framework is worth extending in the following directions: (a) we currently consider a fixed discretization, it would be advantageous to consider adaptive schemes, potentially improving the dependence on the number of variables $n$ and the precision $\varepsilon$; (b)  other shape constraints can be consider in a similar submodular framework, such as $x_i x_j \geqslant 0$ for certain pairs $(i,j)$; (c) a direct convex formulation without discretization could probably be found for quadratic programming with submodular costs (which are potentially non-convex but solvable in polynomial time); (d) a statistical study of isotonic regression with adversarial corruption could now rely on formulations with polynomial-time algorithms.

\bibliography{transport}

\newpage

\appendix

\section{Additional experimental results}
\label{app:results}
We first present here additional results, for robustness of isotonic regression to corrupted data in \myfig{perfs_robust_add}, where we show that up to $75 \%$ of corrupted data, the non-convex loss (solved exactly using submodularity) still finds a reasonable answer (but does not for $90 \%$ of corruption). Then, in \myfig{nonseparable}, we present the effect of adding a smoothness term on top of isotonic constraints: we indeed get a smoother function as expected, and the higher-order algorithms perform significantly better.

\section{Approximate optimization for high-order discretization}
\label{app:approx}
In this section, we consider the set-up of \mysec{algonew}, and we consider the minimization of the extension $\tilde{h}_\downarrow$ on $\mathcal{S} \cap \mathcal{T} \cap [0,1]^{n \times (k-1)}$. We consider the projected subgradient method, which uses an approximate subgradient not from $\tilde{h}_\downarrow$ but from the approximation $\hat{h}_\downarrow$, which we know is obtained from a function $\hat{H}$ such that $| \hat{H}(z) - \tilde{H}(z)| \leqslant \eta$ for all $z$, and for $\eta =  {( n L_q / 2 k)^q}/{q!}$.

The main issue is that the extension $\hat{h}_\downarrow$ is not convex when $\hat{H}$ is not submodular (which could be the case because it is only an approximation of a submodular function). In order to show that the same projected subgradient method converges to an $\eta$-minimizer of $\tilde{h}_\downarrow$ (and hence of $\tilde{H}$), we simply consider a minimizer $\rho^\ast$ of $\tilde{h}_\downarrow$ (which is convex) on $\mathcal{S} \cap \mathcal{T} \cap [0,1]^{n \times (k-1)}$. Because of properties of submodular optimization problems, we may choose $\rho^\ast$ so that it takes values only in $\{0,1\}^{n \times (k-1)}$.

At iteration $t$, given $\rho^{t-1}\in \mathcal{S} \cap \mathcal{T} \cap [0,1]^{n \times (k-1)}$, we compute an approximate subgradient $\hat{w}^{t-1}$ using the greedy algorithm of~\cite{bach2015submodular} applied to $\hat{w}^{t-1}$. This leads to a sequence of indices $(i(s),j(s))\in \{1,\dots,n\} \times \{0,\dots,k-1\}$ and elements $z^s \in \{0,\dots,k-1\}^n$ so that
\[
\hat{h}_\downarrow(\rho^{t-1}) 
-\hat{H}(0) = \langle \hat{w}^{t-1}, \rho^{t-1} \rangle = \sum_{s=1}^{n(k-1)} \rho_{i(s)j(s)}\big[ \hat{H}(z^s)
-  \hat{H}(z^{s-1})\big],
\]
where all $\rho_{i(s)j(s)}$ are arranged in non-increasing order.
Because $\hat{h}_\downarrow$ and $\tilde{h}_\downarrow$ are defined as expectations of evaluations of $\hat{H}$ and $\tilde{H}$, they differ from at most $\eta$. We denote by $\tilde{w}^{t-1}$ the subgradient obtained from $\tilde{H}$.

We consider the iteration $\rho^t = \Pi_{\mathcal{S} \cap \mathcal{T} \cap [0,1]^{n \times (k-1)}}(\rho^{t-1} - \gamma \hat{w}^{t-1})$, where $\Pi_{\mathcal{S} \cap \mathcal{T} \cap [0,1]^{n \times (k-1)}}$ is the orthogonal projection on $\mathcal{S} \cap \mathcal{T} \cap [0,1]^{n \times (k-1)}$.
From the usual subgradient convergence proof (see, e.g.,~\cite{nesterov2004introductory}), we have:
\BEAS
\| \rho^t - \rho^\ast\|_F^2 & \leqslant & 
\| \rho^{t-1} - \rho^\ast\|_F^2 - 2 \gamma \langle  \rho^{t-1} - \rho^\ast,  \hat{w}^{t-1} \rangle
+ \gamma^2 \|  \hat{w}^{t-1} \|_F^2\\
& \leqslant & 
\| \rho^{t-1} - \rho^\ast\|_F^2 - 2 \gamma \langle  \rho^{t-1} - \rho^\ast,  \hat{w}^{t-1} \rangle
+ \gamma^2 B^2\\
& =  & 
\| \rho^{t-1} - \rho^\ast\|_F^2 - 2 \gamma \big[ \hat{h}_\downarrow(\rho^{t-1}) 
-\hat{H}(0) \big] + 2\gamma \langle \rho^\ast, \hat{w}^{t-1} \rangle + \gamma^2 B^2,
\EEAS
using  the bound $\|  \hat{w}^{t-1} \|_F^2 \leqslant B^2 \leqslant nk \big[ L_1 / k  + 2  {( n L_q / k)^q}/{q!} \big ]$.
Moreover, we have
\[
\langle \rho^\ast, \hat{w}^{t-1} \rangle
= \langle \rho^\ast, \hat{w}^{t-1} - \tilde{w}^{t-1} \rangle + \tilde{h}_\downarrow(\rho^\ast)
- \tilde{H}(0)
.\]
Since $\rho^\ast \in \{0,1\}^n$, there is a single element $s$ so that $\rho_{i(s)j(s)} - \rho_{i(s+1)j(s+1)}$ is different from zero, and thus $\langle \rho^\ast, \hat{w}^{t-1} - \tilde{w}^{t-1} \rangle$ is the difference between two function values of $\tilde{H}$ and $\hat{H}$. Thus overall, we get:
\[
\| \rho^t - \rho^\ast\|_F^2
\leqslant \| \rho^{t-1} - \rho^\ast\|_F^2 - 2 \gamma \big[ 
\tilde{h}_\downarrow(\rho^{s-1}) - \tilde{h}_\downarrow(\rho^\ast) - 2 \eta \big] + \gamma^2 B^2,
\]
which leads to the usual bound for the projected subgradient method, with an extra $2 \eta$ factor, as if (up to the factor of $2$) $\tilde{H}$ was submodular.

\begin{figure}
\begin{center}
\includegraphics[scale=.38]{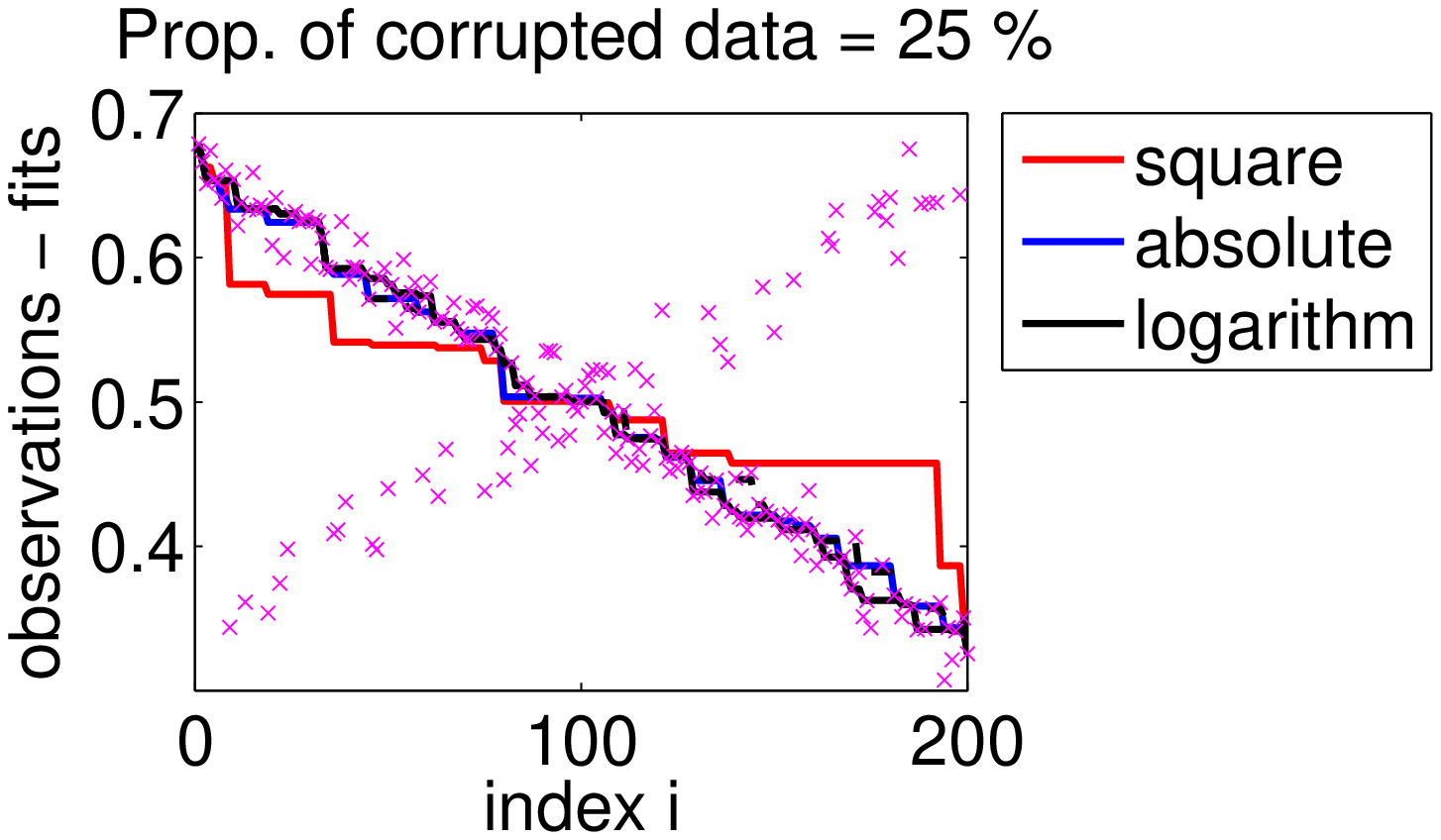}
\includegraphics[scale=.38]{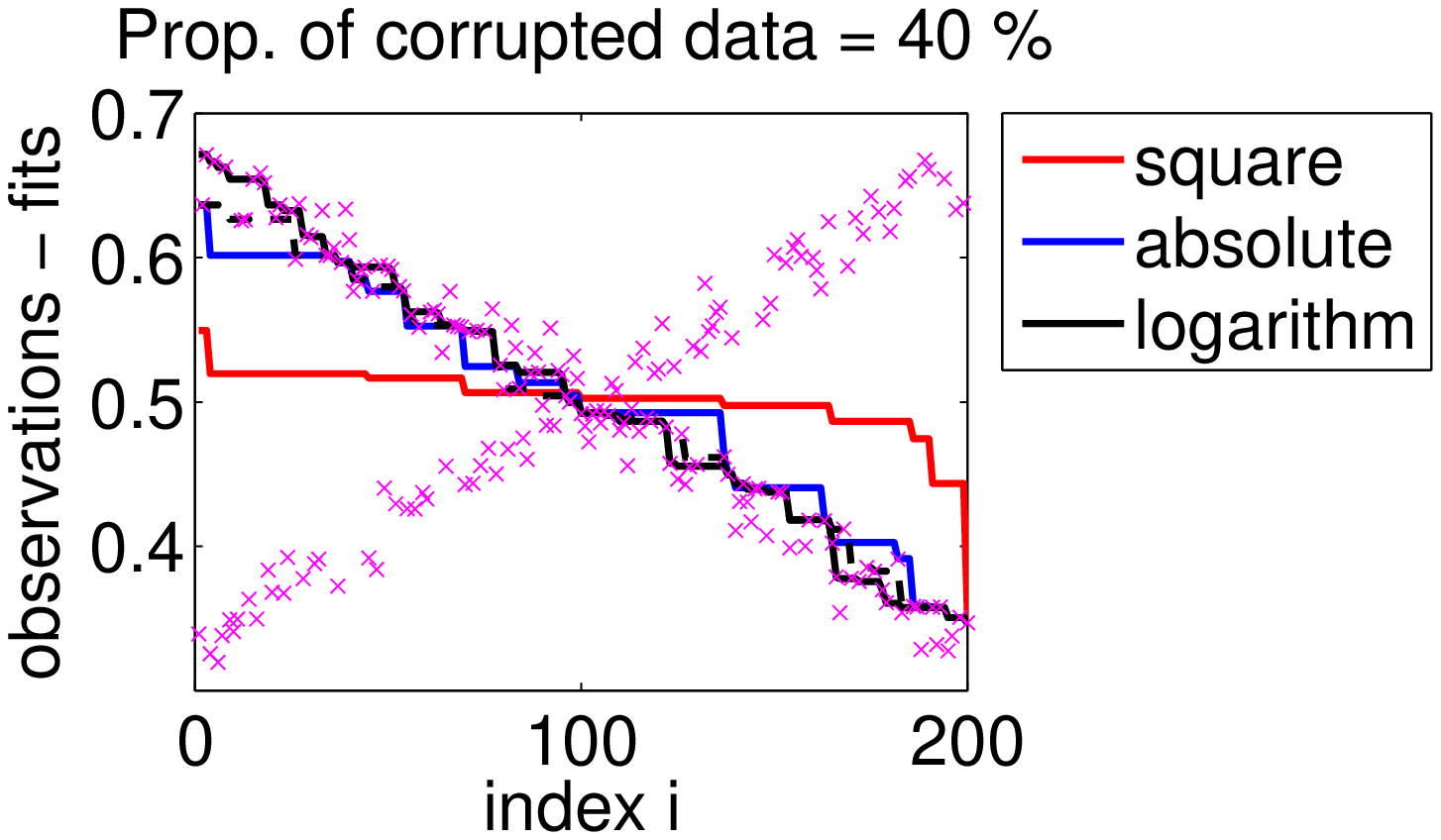}

\vspace*{.2cm}

\includegraphics[scale=.38]{robust_50_legendout.eps}
\includegraphics[scale=.38]{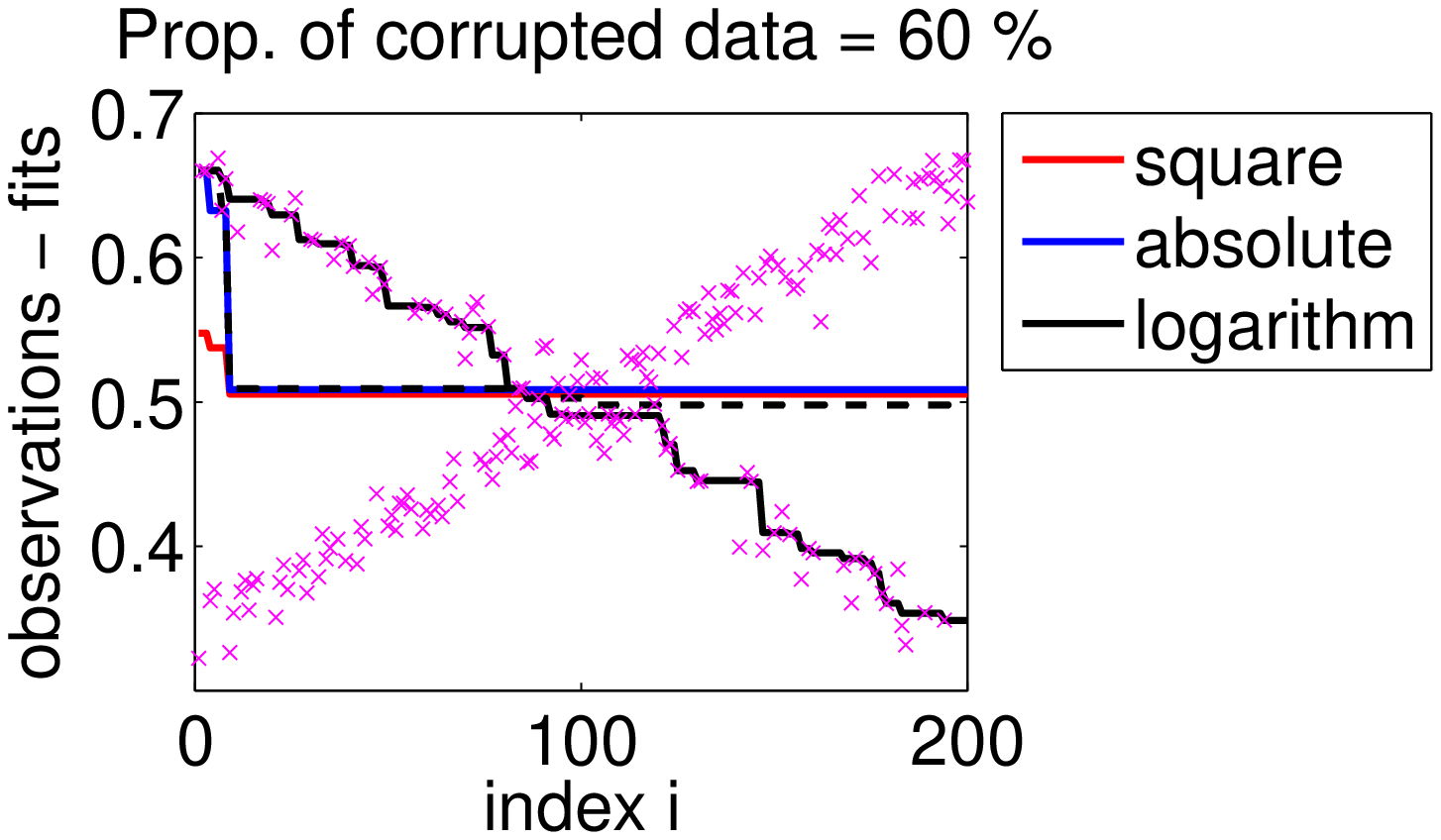}

\vspace*{.2cm}

\includegraphics[scale=.38]{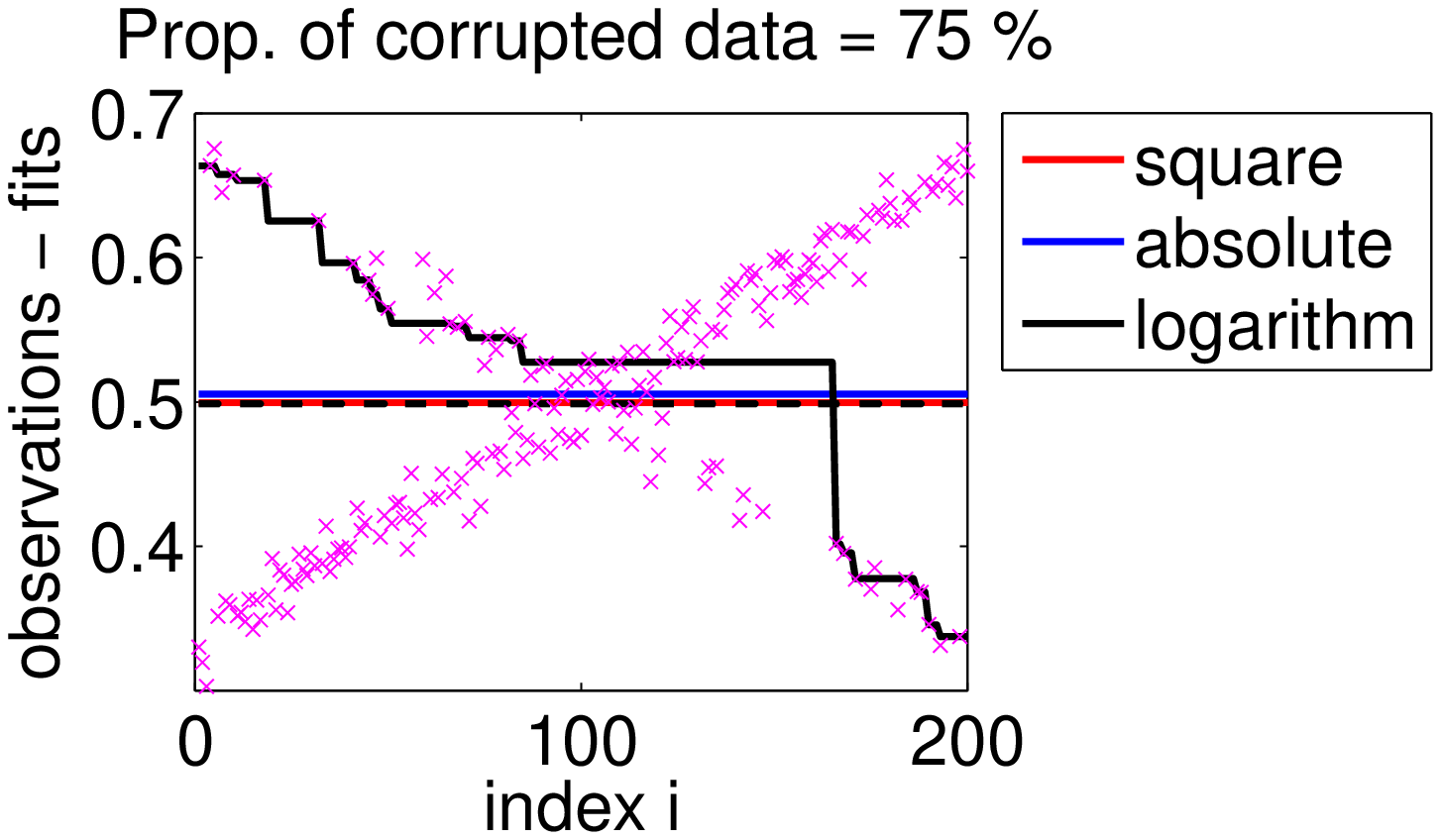}
\includegraphics[scale=.38]{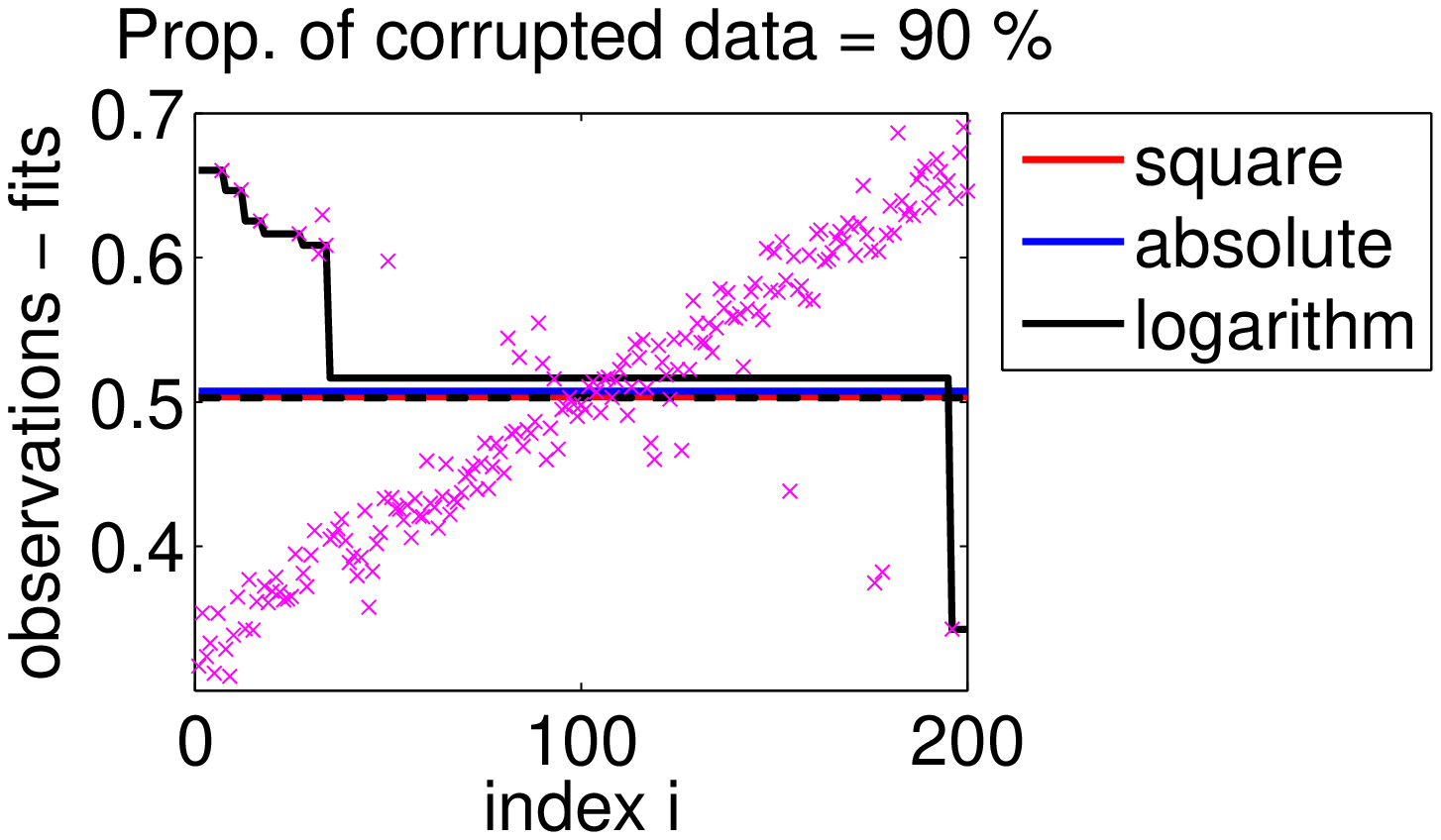}
\caption{Robust isotonic regression with decreasing constraints (observation in pink crosses, and results of isotonic regression with various losses in red, blue and black), from $25 \%$ to $90\%$ of corrupted data.  The dashed black line corresponds to majorization-minimization algorithm started from the observations.
\label{fig:perfs_robust_add}}
\end{center}
\end{figure}

\begin{figure}
\begin{center}

\vspace*{-.1cm}

\includegraphics[scale=.38]{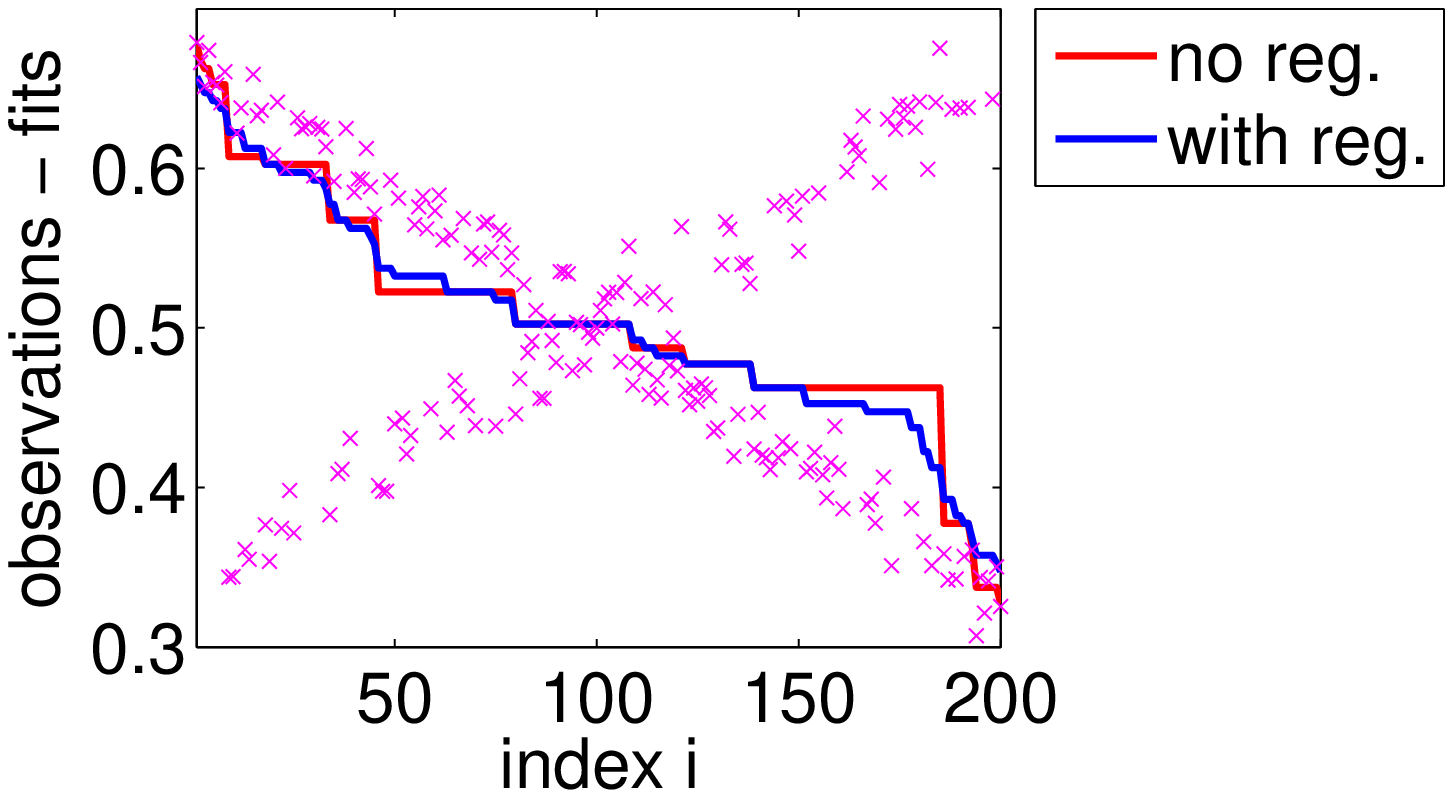}  \hspace*{1cm}
\includegraphics[scale=.38]{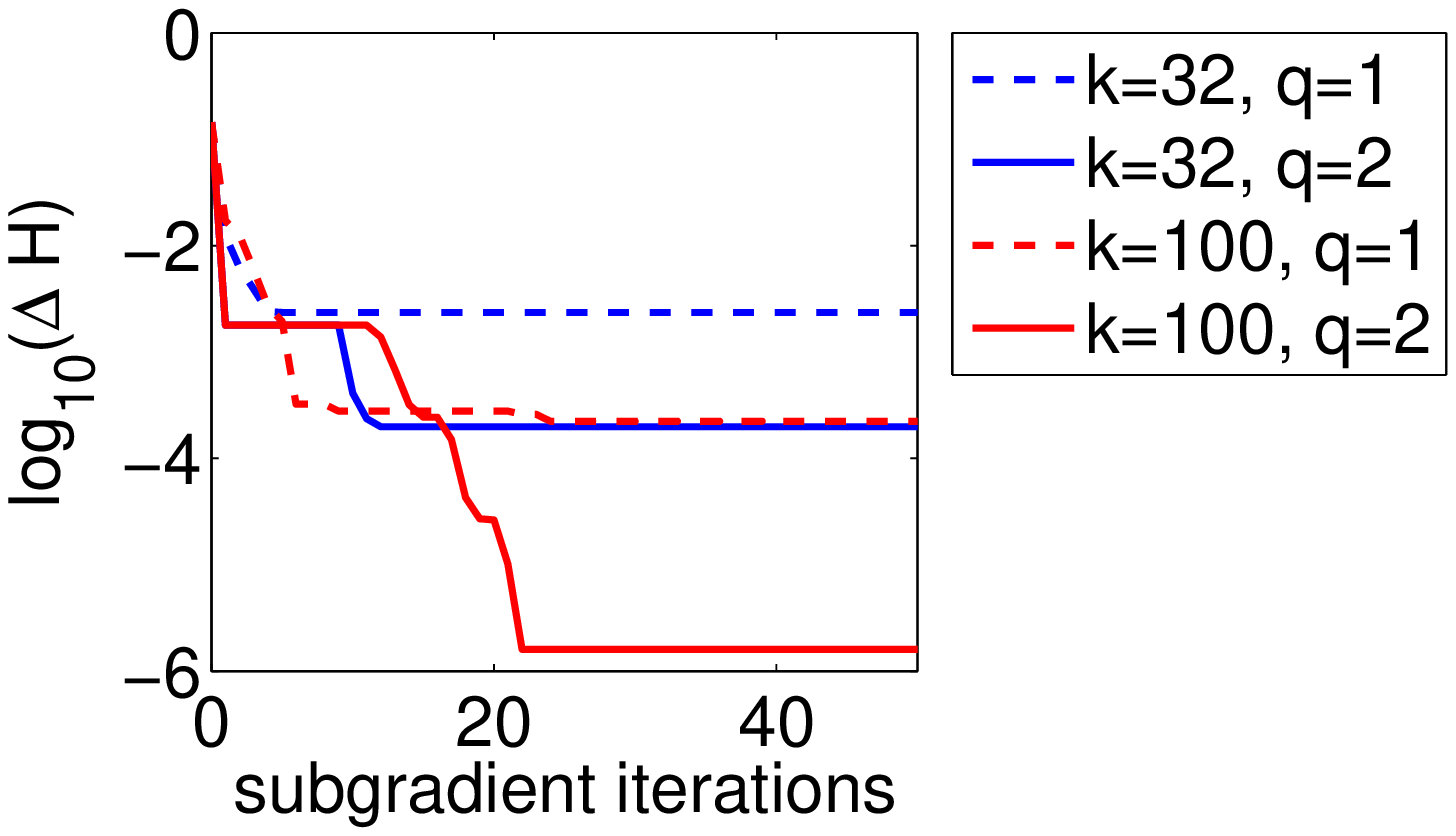}

\vspace*{-.2cm}

\caption{
 Left: Effect of adding additional regularization ($n = 200$). Right: Non-separable problems ($n=25$), distance to optimality in function values (and log-scale) for two discretization values ($k=32$ and $k=100$) and two orders of approximation ($q=1$ and $q=2$).  
\label{fig:nonseparable}}
\end{center}
\end{figure}

\section{Quadratic submodular functions}
\label{app:quad}

In this section, we consider the case where the function $H$ is a second-order polynomial, as described in \mysec{quad} of the main paper.

\subsection{Without isotonic constraints}
We first consider the program without isotonic constraints, which is the convex program outlined in \mysec{quad}.
 Let $(Y,y)$ be a solution of the following minimization problem, where $\diag(A)=0$ and $A \leqslant 0$:
\BEAS
\min_{Y,y } \frac{1}{2} \tr ( A + \Diag(b) ) Y +  c^\top y
& \mbox{ such that } & \forall i,  \ Y_{ii}\leqslant y_i    \\
& & \forall i \neq j, \      Y_{ij} \leqslant y_i, \  Y_{ij} \leqslant y_j, \\
& & \forall i \neq j, \
\left(
\begin{array}{ccc}
Y_{ii} & Y_{ij} & y_i \\ Y_{ij} & Y_{jj} & y_j \\ y_i & y_j & 1
\end{array}\right) \succcurlyeq 0.
\EEAS

  Some subcases are worth considering, showing that it is tight in these situations, that is, (a) the optimal values are the same as minimizing $H(x)$ and (b) one can recover an optimal $x \in [0,1]^n$ from a solution of the problem above:

\vspace*{-.1cm}

\BIT
\item[--]\textbf{``Totally'' submodular}: if $c \leqslant 0$, then, following~\cite{kim2003exact}, if we take
$\mathcal{Y}_{\rm SDP} = \big\{ (Y,y), \ \forall i \neq j, Y_{ij}^2 \leqslant Y_{ii} Y_{jj}, \ \forall i, 
y_{i}^2 \leqslant Y_{ii} \leqslant 1, y_i \geqslant 0 \big\}$,
 then by considering any minimizer $(Y,y) \in  \mathcal{Y}_{\rm SDP} $ and taking $x = \diag(Y)^{1/2} \in [0,1]^n$ (point-wise square root), we have
$
H(x) = \frac{1}{2} b^\top \diag(Y) + \frac{1}{2}\sum_{i \neq j} A_{ij} Y_{ii}^{1/2} Y_{jj}^{1/2}
+ \sum_{i} c_i Y_{ii}^{1/2}$, and since $A_{ij} \leqslant 0$ and $c_i \leqslant 0$, it is less than  $\tr Y ( A + \Diag(b)) + c^\top y \leqslant \inf_{x \in [0,1]^n} H(x)$, and   thus $x$ is a minimizer.

\item[--] \textbf{Combinatorial}: if $b \leqslant 0$, then we have:
$
H(x) = \sum_{i \neq j}A_{ij} x_i x_j + \frac{1}{2}\sum_{i=1}^n (-b_i)x_i ( 1- x_i) + (c + b/2)^\top x.
$
Since $x_i x_j \leqslant \inf \{x_i,x_j\}$, $A_{ij} \leqslant 0$, $x_i ( 1- x_i ) \geqslant 0$ and $b_i \leqslant 0$, we have
\[\inf_{x \in [0,1]^n} H(x) \geqslant \inf_{x \in [0,1]^n}  \sum_{i \neq j}A_{ij} \inf\{ x_i , x_j\}  +   (c + b/2)^\top x.\]
Since the problem above is the Lov\'asz extension of a submodular function the infimum may be restricted to $\{0,1\}^n$. Since for such $x$,   $x_i x_j = \inf \{x_i,x_j\}$ and $x_i(1-x_i) =0$, this is the infimum of $H(x)$ on $\{0,1\}^n$, which is itself greater than (or equal) to the infimum on $[0,1]^n$. Thus, all infima are equal.
Therefore, the usual linear programming relaxation, with 
$\mathcal{Y}_{\rm LP} =
 \big\{ (Y,y), \ \forall i \neq j, Y_{ij}  \leqslant \inf \{ y_i, y_j\} , \ \forall i, 
Y_{ii} \leqslant y_i, 0 \leqslant y_i \leqslant 1\big\}$ is tight. We can get a candidate $x \in \{0,1\}^n$ by simple rounding.
 
\item[--] \textbf{Convex}: if $A + \Diag(b) \succcurlyeq 0$, we can use the relaxation 
$
 \big\{ (Y,y), \ Y \succcurlyeq yy^\top, y \in [0,1]^n \big\}$ (which trivally leads to a solution with $x= y$). But we can also consider the relaxation
 $\mathcal{Y}_{\rm cvx} =\big\{ (Y,y), \ \forall i \neq j,   \left(
\begin{array}{cc}
Y_{ii} -y_i^2 & Y_{ij} - y_i y_j  \\ Y_{ij}-y_i y_j & Y_{jj}-y_j^2  
\end{array}\right)  , \ \forall i, Y_{ii} \leqslant y_i \big\}$. We have then
\BEAS
\frac{1}{2} \tr ( A + \Diag(b) ) Y + y^\top c & = & 
\frac{1}{2}\sum_{i \neq j} A_{ij} ( Y_{ij} - y_i y_j ) 
\\
& & \hspace*{1cm} + \frac{1}{2} \sum_{i} b_i ( Y_{ii} - y_i^2) + y^\top c  + \frac{1}{2} y^\top ( A + \Diag(b)) y 
\\
& = & 
\frac{1}{2}\sum_{i \neq j} A_{ij} \sqrt{Y_{ii} - y_i^2} \sqrt{Y_{jj} - y_j^2}  
\\
& & \hspace*{1cm} + \frac{1}{2} \sum_{i} b_i ( Y_{ii} - y_i^2) + y^\top c  + \frac{1}{2} y^\top ( A + \Diag(b)) y,
\EEAS
once we minimize with respect to $Y_{ij}$, from which we have, since $A_{ij} \leqslant 0$,
$Y_{ij} = y_i y_j + \sqrt{Y_{ii} - y_i^2} \sqrt{Y_{jj} - y_j^2}  $. If we denote $\sigma_i = \sqrt{Y_{ii} - y_i^2}$, we get an objective functiion equal to 
$\frac{1}{2} \sigma^\top ( A + \Diag(b)) \sigma + \frac{1}{2} y^\top ( A + \Diag(b)) y + c^\top y$, which is minimized when $\sigma=0$ and thus $y$ is a minimizer of the original problem.
\EIT

\subsection{Counter-example}

By searching randomly among problems with $n=3$, and obtaining solutions by looking at all $3^n=3^3=27$ patterns for the $n$ variables being $0$, $1$ and in $(0,1)$, for the following function:
\[
H(x_1,x_2,x_3) = \frac{1}{200} \left( \begin{array}{c}
x_1 \\ x_2 \\ x_3
\end{array} \right)  ^\top \left( \begin{array}{rrr}
- 193 & -100 & -100 \\ -100 & 317 & -100 \\ -100 & -100 & -45
\end{array} \right) \left( \begin{array}{c}
x_1 \\ x_2 \\ x_3
\end{array} \right)  + \frac{1}{100} \left( \begin{array}{c}
x_1 \\ x_2 \\ x_3
\end{array} \right) ^\top \left( \begin{array}{r}
-146 \\ 136 \\  -216
\end{array} \right), \]
the global optimum is 
$ \left( \begin{array}{c}
x_1 \\ x_2 \\ x_3
\end{array} \right) \approx \left( \begin{array}{c}
1 \\ 0.7445 \\ 0
\end{array} \right)$, the minimal value of $H$ is approximately $-0.3835$, while the optimal value of the semidefinite program is $-0.3862$. This thus provides a counter-example.

\subsection{With isotonic constraints}

We consider the extra constraints: 
for all $(i,j) \in E$, $y_i \geqslant y_j$, $Y_{ii} \geqslant Y_{jj}$,   $Y_{ij} \geqslant \max \{ Y_{jj}, y_j - y_i + Y_{ii} \}$  and $Y_{ij} \leqslant \max \{ Y_{ii}, y_i - y_j + Y_{jj} \}$, which corresponds to 
$x_i \geqslant x_j$, $x_i^2 \geqslant x_j^2$,   $x_i x_j \geqslant x_j^2$,
$x_i(1-x_i) \leqslant x_i ( 1-x_j)$,
$x_i x_j \leqslant x_i^2$, and $x_i(1-x_j) \geqslant x_j ( 1-x_j)$.

In the three cases presented above, the presence of isotonic constraints leads to the following modifications:

\vspace*{-.1cm}

\BIT
\item[--] ``Totally'' submodular: because of the extra constraints $Y_{ii} \geqslant Y_{jj}$, for all $(i,j) \in E$, the potential solution $x = \diag(Y)^{1/2}$ satisfies the isotonic constraint and hence we get a global optimum.

\item[--] Combinatorial: nothing is changed, the solution is constrained to be in $\{0,1\}^n$ with the extra isotonic constraint, implied by $y_i \leqslant y_j$,  for all $(i,j) \in E$.

\item[--] Convex: the original problem is still a convex problem where the constraints $y_i \leqslant y_j$,  for all $(i,j) \in E$, are sufficient to impose the isotonic constraints.
\EIT

 \end{document}